\documentclass[]{fairmeta}
\PassOptionsToPackage{capitalize,noabbrev,nameinlink}{cleveref}

\PassOptionsToPackage{numbers, compress}{natbib}

\usepackage[utf8]{inputenc} 
\usepackage[T1]{fontenc}    
\usepackage{hyperref}       
\usepackage{url}            
\usepackage{booktabs}       
\usepackage{amsfonts}       
\usepackage{nicefrac}       
\usepackage{microtype}      
\usepackage{xcolor}         
\usepackage{pifont}
\usepackage{makecell} 
\usepackage{tcolorbox}

\usepackage{amsfonts}       
\usepackage{colortbl} 
\usepackage{nicefrac}       
\usepackage{microtype}      
\usepackage{xcolor}         
\usepackage{xspace}
\usepackage{placeins}  
\usepackage{titletoc}
\usepackage{amsmath}
\usepackage{booktabs}
\usepackage{multirow}
\usepackage{mathtools}
\usepackage{array}
\usepackage{amsthm}
\usepackage{verbatim} 
\usepackage{enumerate}
\usepackage{bbm}
\usepackage{commath}
\usepackage{wrapfig}
\usepackage{subcaption} 
\usepackage{amsbsy}
\usepackage{float}
\usepackage{algorithm}
\usepackage{algpseudocode}
\usepackage{listings}
\usepackage{caption}
\usepackage{enumitem}
\usepackage{lipsum}
\usepackage{listings}
\usepackage[colorinlistoftodos]{todonotes}
\usepackage{wrapfig, blindtext}

\newtheorem{lemma}{Lemma}
\newtheorem{corollary}{Corollary}
\theoremstyle{definition}

\theoremstyle{remark}
\newtheorem{remark}{Remark}
\usepackage{thmtools}
\usepackage{thm-restate}

\DeclareMathOperator{\rank}{rank}

\definecolor{codegreen}{rgb}{0,0.6,0}
\definecolor{codegray}{rgb}{0.5,0.5,0.5}
\definecolor{codepurple}{rgb}{0.58,0,0.82}
\definecolor{backcolour}{rgb}{0.95,0.95,0.92}
\definecolor{codepurple}{rgb}{0.58,0,0.82}
\definecolor{papercolor}{HTML}{0668E1}
\definecolor{darkred}{rgb}{0.68,0.05,0.0}
\definecolor{tab_blue}{RGB}{230,245,255} 
\definecolor{tab_purple}{RGB}{245,230,255} 
\definecolor{codeblue}{rgb}{0.25,0.5,0.5}
\definecolor{codekw}{rgb}{0.85, 0.18, 0.50}
\definecolor{mygreen}{RGB}{0,128,0}
\definecolor{emerald}{RGB}{34,139,34}

\newcommand{\algoname}{\textsc{ReasonCache }}
\newcommand{\algofullname}{\textsc{ReasonCache }}

\definecolor{yesgreen}{HTML}{009901}
\definecolor{nored}{HTML}{FF0000}

\definecolor{mmqaLight}{HTML}{D9EAF4}   
\definecolor{ot3Light}{HTML}{F2D6E4}    

\newcolumntype{M}{>{\centering\arraybackslash\columncolor{mmqaLight}}c}
\newcolumntype{O}{>{\centering\arraybackslash\columncolor{ot3Light}}c}

\title{Reason\textsc{CACHE}\xspace: Teaching LLMs To Reason  \\ \vspace{-6mm} Without Weight Updates \\
}

\author[1,2,*]{Sharut Gupta}
\author[2]{Phillip Isola}
\author[2,3]{Stefanie Jegelka}
\author[1]{David Lopez-Paz}
\author[1]{Kartik Ahuja}
\author[1]{Mark Ibrahim}
\author[1]{Mohammad Pezeshki}

\affiliation[1]{FAIR at Meta}
\affiliation[2]{MIT CSAIL}
\affiliation[3]{TU Munich}

\contribution[*]{Work done during an internship at Meta}

\abstract{
Can Large language models (LLMs) learn to reason without any weight update and only through in-context learning (ICL)? ICL is strikingly sample-efficient, often learning from only a handful of demonstrations, but complex reasoning tasks typically demand many training examples to learn from. However, naively scaling ICL by adding more demonstrations breaks down at this scale: attention costs grow quadratically, performance saturates or degrades with longer contexts, and the approach remains a shallow form of learning. Due to these limitations, practitioners predominantly rely on in-weight learning (IWL) to induce reasoning. In this work, we show that by using \textit{Prefix Tuning}, LLMs can learn to reason without overloading the context window and without any weight updates. We introduce \textsc{ReasonCache}, an instantiation of this mechanism that distills demonstrations into a fixed key-value cache. Empirically, across challenging reasoning benchmarks, including GPQA-Diamond, \textsc{ReasonCache} outperforms standard ICL and matches or surpasses IWL approaches. Further, it achieves this all while being more efficient across three key axes: data, inference cost, and trainable parameters. We also theoretically prove that \textsc{ReasonCache} can be strictly more expressive than low-rank weight update since the latter ties expressivity to input rank, whereas \textsc{ReasonCache} bypasses this constraint by directly injecting key-values into the attention mechanism. Together, our findings identify \algoname as a middle path between in-context and in-weight learning, providing a scalable algorithm for learning reasoning skills beyond the context window without modifying parameters.\looseness=-1}

\date{\today}
\correspondence{\email{sharut@mit.edu}, \email{mpezeshki@meta.com}}
\metadata[Blogpost]{\url{https://reasoncache.github.io/}}

\begin{document}
\maketitle

\section{Introduction}
In-context learning (ICL) represents one of the most remarkable capabilities of modern large language models. By placing demonstrations directly in the prompt, ICL elicits complex behaviors (shifting styles, formats, and task competencies) without a single gradient update \citep{brown2020language,min2022rethinking,agarwal2024many,eyuboglu2025cartridges,gupta2023context,petrov2023prompting,wang2025prefix,yin2024deeper}. In many regimes, especially when data is limited, this inference-time conditioning often matches or even exceeds task-specific fine-tuning on a range of tasks such as factual question answering, puzzles, summarization or grade school mathematics~\citep{wei2022chain,zhou2022least,akyurek2024surprising,eyuboglu2025cartridges,agarwal2024many,lampinen2025generalization,yang2024synthetic,si2022prompting,awadalla2022exploring}.

ICL\footnote{Throughout, we use the term \emph{in-context learning} broadly to denote any form of adaptation driven by information provided in the context (examples, instructions, or intermediate reasoning) without updating model parameters.}, however, struggles with demanding reasoning tasks; complex mathematical problem-solving and novel algorithmic skills rarely emerge just from a handful of demonstrations. While populating the prompt with more examples appears to be a natural workaround, standard ICL soon hits scaling limits, for three key reasons. 1) It is computationally burdensome and capacity-limited; the finite context window during training bounds how many demonstrations fit, and attention’s quadratic scaling causes inference latency and memory usage to grow rapidly; 2) It is unreliable at length; the advertised window for long context models often overstates how much context the model can use reliably, and performance often saturates and even degrades as relevant information is pushed farther away \citep{agarwal2024many,zhang2025more,zhang2025memory,liu2024lost}; 3) Most critically, concatenating examples remains a shallow form of adaptation and fails to synthesize the novel reasoning pathways required for complex logic~\citep{guo2025deepseek, mosbach2023few,de2025context,geng2024great}. 
For these reasons, practitioners typically relegate reasoning to parameter updates, typically through supervised fine-tuning or reinforcement learning, while using standard ICL primarily to steer model behavior rather than to acquire new reasoning capabilities~\citep{muennighoff2025s1,guha2025openthoughts,ye2025limo,yuan2025naturalreasoning,yu2023metamath,yue2023mammoth,su2025parametricrag,caccia2025training,kuratov2025cramming}. We revisit this prevailing paradigm and ask whether ICL’s observed limitations are truly fundamental, or instead reflect the particular way in-context supervision is currently represented and scaled. This leads us to the question

\begin{center}
\begin{tcolorbox}[colback=gray!5!white,
                  colframe=black!50,
                  boxrule=0.9pt,
                  arc=2pt,
                  width=1.\linewidth]
\emph{How can in-context learning be scaled into a mechanism for reasoning and what are its implications?}
\end{tcolorbox}
\end{center}

In this work, we demonstrate that prefix tuning (PT) \citep{li2021prefix}, an often-overlooked form of in-context adaptation, provides the ideal interface for scaling ICL and answering this question. We refer to this reasoning instantiation of PT as \textsc{ReasonCache}. Specifically, \algoname learns a small set of prefix key–value vectors at each attention layer while keeping all pretrained weights frozen, so that training compresses the effect of many demonstrations into a compact \emph{cache}.

\begin{figure}
    \centering
    \includegraphics[width=1.\linewidth]{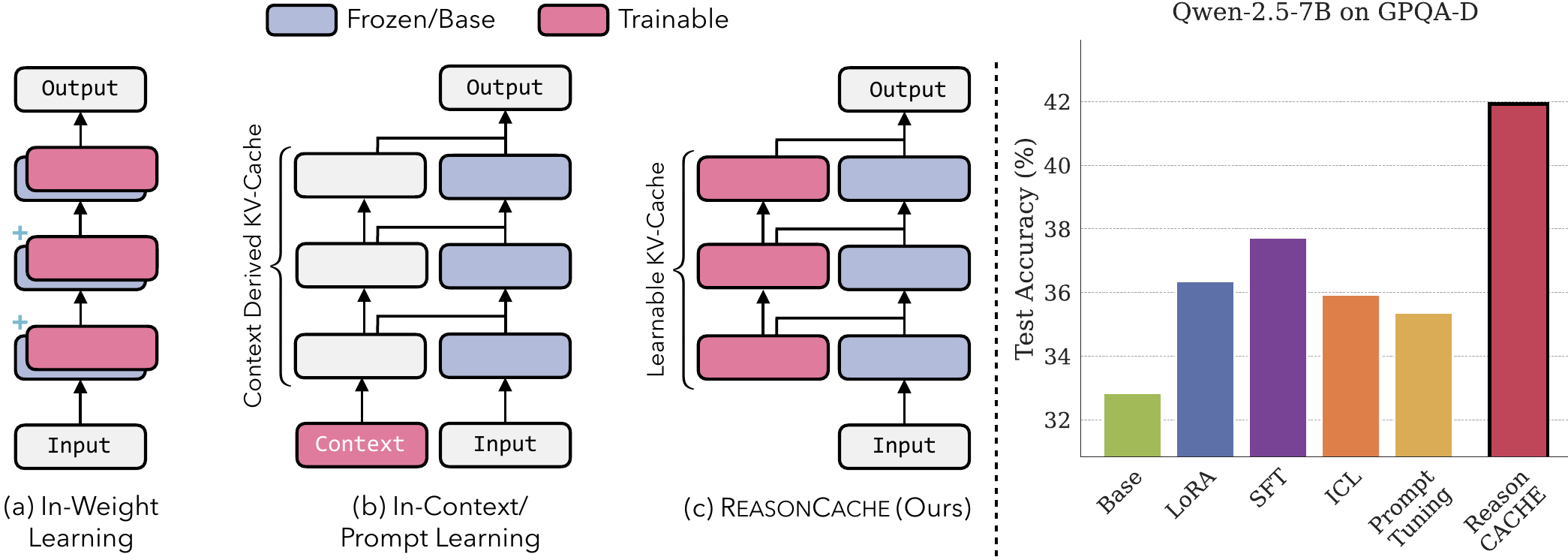}
    \caption{
    (Left) Comparison of adaptation mechanisms. (a) In-weight learning updates some or all model parameters. (b) In-context learning (ICL) adapts behavior by conditioning on exemplar tokens without parameter updates. (c) \algoname (ours) learns an entire KV cache directly (trainable prefixes) at each attention layer, compressing demonstrations into a compact cached state while keeping the backbone frozen. (Right) Test accuracy (\%) of in-context and in-weight adaptation approaches on GPQA-Diamond; \algofullname dominates both in-context and in-weight baselines, also surpassing full supervised fine-tuning.\looseness=-1}
    \label{fig:teaser}
\end{figure}

This scaling has four immediate implications that collectively position learning in-context as a viable route for reasoning. 
First, LLMs can reliably perform challenging reasoning tasks \emph{without updating pretrained weights and without overloading the context}. Specifically, \algoname improves GPQA-Diamond by up to 11\%- and outperforms SFT and LoRA.
Second, \algoname is markedly more \emph{efficient}, achieving equal or better accuracy with 59\% less data and 46\% fewer trainable parameters than LoRA on GSM8K. 
Third, and most consequential for deployment, \algoname yields faster inference with higher accuracy, generating 34\% shorter reasoning chains while improving accuracy by 11\% over SFT on GPQA-Diamond and translating directly into lower inference compute and monetary expense.
Taken together, these results show that \algoname overcomes the scaling limits of exemplar-based ICL, enabling reasoning with frozen models while simultaneously improving data efficiency, reducing the trainable footprint and inference-time cost.

To explain these gains, we theoretically prove that \algoname can be \emph{strictly more expressive} than LoRA. Low-rank weight updates act through the input sequence, constraining the induced key-value cache by both the adapter rank and input rank, resulting in what we call a \emph{carrier bottleneck}. \algoname bypasses this by injecting learned key-value vectors directly into attention, enabling value-space directions that rank-limited updates cannot realize. We confirm this mechanism empirically, showing that, \algoname increases the effective rank of learned representation by $\sim$20\% relative to SFT and LoRA.\looseness=-1 

To summarize, the key contributions of our work are:
\begin{itemize}
    \item We show reasoning need not be learned through parameter updates. By distilling in-context data into a fixed key–value prefix, \algoname overcomes the scaling limits of exemplar-based ICL and matches or surpasses in-weight adaptation methods such as LoRA and SFT on challenging reasoning benchmarks (GSM8K, MATH, AIME, GPQA).
    \item \algoname improves efficiency along three axes without sacrificing accuracy: on GSM8K it needs 59\% less \emph{training data} and 46\% fewer \emph{trainable parameters} than LoRA, and at inference it boosts accuracy by 44\% with 90\% less compute than ICL; on GPQA it reduces \emph{generation length} by 34\% while improving accuracy by 11\% over SFT.\looseness=-1
    \item We prove \algoname can be strictly more expressive than LoRA due to an input-dependent bottleneck in low-rank weight updates, and confirm this empirically by showing a $\sim$20\% increase in representation's effective rank over SFT and LoRA.
\end{itemize}

\section{Prefix Tuning}
\label{sec:pt}
Our method, \textsc{ReasonCache}, builds on prefix tuning (PT)~\citep{li2021prefix}, which we formalize in this section. 
In-context learning (ICL) operates by constructing a key–value (KV) cache from the tokens provided in the context: the model processes these tokens through frozen weights, producing keys and values that subsequent tokens attend to. The expressiveness of ICL is therefore limited by what can be represented through the forward pass of raw text under pretrained weights. Moreover, the size of the KV cache is directly determined by the context length. In contrast, PT removes these limitations by making the KV cache itself a learnable object. Instead of deriving keys and values from input tokens, PT directly optimizes auxiliary KV vectors at each attention layer. This section formalizes the PT mechanism and establishes its relationship to ICL and prompt tuning.\\

\subsection{Attention and the KV-Cache}

Consider a transformer with $L$ layers and dimension $d$. Given the representation $H^{(\ell-1)} \in \mathbb{R}^{n \times d}$ from the previous layer (with $H^{(0)}$ denoting the input embeddings), layer $\ell$ computes queries, keys, and values:
\[
Q^{(\ell)} = H^{(\ell-1)} W_Q^{(\ell)}, \qquad
K^{(\ell)} = H^{(\ell-1)} W_K^{(\ell)}, \qquad
V^{(\ell)} = H^{(\ell-1)} W_V^{(\ell)}.
\]
The attention output in each layer is
\(
Y^{(\ell)} = \mathrm{softmax}\!\left( Q^{(\ell)} (K^{(\ell)})^\top / \sqrt{d} \right) V^{(\ell)}.
\)
During autoregressive generation, keys and values are cached so that each new token can attend to all previous tokens without recomputation. This \emph{KV-cache} $\mathcal{C} = \{(K^{(\ell)}, V^{(\ell)})\}_{\ell=1}^{L}$ serves as the model's working memory: it determines what information is accessible to subsequent tokens.
\subsection{The Mechanism of Prefix Tuning}
Prefix tuning introduces $m$ trainable key--value pairs $(P_K^{(\ell)}, P_V^{(\ell)}) \in \mathbb{R}^{m \times d} \times \mathbb{R}^{m \times d}$ at each layer $\ell$. The augmented keys and values are:
\[
\tilde{K}^{(\ell)} =
\begin{bmatrix}
P_K^{(\ell)} \\[2pt]
K^{(\ell)}
\end{bmatrix},
\qquad
\tilde{V}^{(\ell)} =
\begin{bmatrix}
P_V^{(\ell)} \\[2pt]
V^{(\ell)}
\end{bmatrix}.
\]
Each query $q_i^{(\ell)}$ attends jointly to prefix and token-derived keys, producing an output that is a convex combination over all values:
\[
\tilde{y}_i^{(\ell)} = \sum_{p=1}^{m} \alpha_{ip} \, (P_V^{(\ell)})_p + \sum_{j=1}^{n} \alpha_{i,m+j} \, v_j^{(\ell)},
\]
where $\alpha_{ip}, \alpha_{i,m+j} \geq 0$ are the attention weights (summing to one across each row). Throughout, all pretrained weights remain frozen; only the prefix parameters $\mathcal{P} = \{(P_K^{(\ell)}, P_V^{(\ell)})\}_{\ell=1}^{L}$ are optimized.

\remark The prefix values $(P_V^{(\ell)})_p$ are free parameters that can point in \emph{any} direction in $\mathbb{R}^d$. In contrast, token-derived values $v_j^{(\ell)}$ must lie in the rowspace of $H^{(\ell-1)} W_V^{(\ell)}$. This geometric freedom is the source of PT's enhanced expressivity, which we formalize in \Cref{sec:theory}.

\remark The prefix influences the model at two scales. Within each layer, prefix values contribute to the attention mixture. Across layers, the attention output (which includes contributions from the prefix) becomes the hidden representation from which subsequent keys and values are derived. Thus deeper layers see token representations that have already been shaped by prefix vectors at earlier layers.

\subsection{Relationship to ICL and Prompt Tuning}
Prefix tuning can be understood as a generalization of two related methods:

\begin{itemize}
    \item \textbf{In-context learning.} In ICL, demonstration tokens $X_{\text{demo}}$ are prepended to the input and processed through frozen weights, producing a demonstration cache $\mathcal{C}_{\text{demo}} = \{(K_{\text{demo}}^{(\ell)}, V_{\text{demo}}^{(\ell)})\}_{\ell=1}^{L}$. This is equivalent to PT with $P_K^{(\ell)} = K_{\text{demo}}^{(\ell)}$, $P_V^{(\ell)} = V_{\text{demo}}^{(\ell)}$, and no optimization.

    \item \textbf{Prompt tuning.} Prompt tuning \citep{lester2021power} learns continuous embeddings $E \in \mathbb{R}^{m \times d}$ prepended at the input layer \textit{only}, which then propagate through the frozen network to produce KV vectors at each layer. This is equivalent to PT where the prefix vectors are constrained to be outputs of the frozen model on $E$, rather than free parameters.
\end{itemize}

The key distinction is that PT optimizes KV vectors at each layer independently, bypassing the frozen projections. This gives PT strictly greater representational freedom as it can produce prefix vectors that no input embedding could generate.

\subsection{Training and Inference}
The prefix parameters $\mathcal{P} = \{(P_K^{(\ell)}, P_V^{(\ell)})\}_{\ell=1}^{L}$ are optimized to minimize the standard next-token prediction loss over a dataset $\mathcal{D} = \{D_1, \ldots, D_M\}$ of $M$ sequences:
\begin{equation}
\label{eq:pt_objective}
\min_{\mathcal{P}} \sum_{j=1}^{M} \mathcal{L}(D_j \mid \mathcal{P}),
\qquad
\mathcal{L}(D_j \mid \mathcal{P}) = -\sum_{t=1}^{|D_j|} \log p_\theta\bigl(x_t^{(j)} \mid x_{<t}^{(j)}, \mathcal{P}\bigr),
\end{equation}
where $\theta$ denotes the frozen model parameters.

\paragraph{Initialization.}
Prefixes can be initialized randomly or from demonstration KV-caches, truncated or padded to the prefix length $m$. Alternatively, a trainable auxiliary network $f_\phi:\mathbb{R}^{d}\!\to\!\mathbb{R}^{2Ld}$ (e.g., an MLP) can generate the prefix for a fixed random input; this reparameterization often stabilizes optimization~\citep{li2021prefix}. At inference, the prefix is computed once and cached, and $f_\phi$ can be discarded.

The formulation above presents PT as a general mechanism for injecting learned key–value vectors into the attention computation of a frozen transformer.
In the rest of this paper, we use \algoname to denote PT specialized to reasoning.
In other words, \textsc{ReasonCache} introduces no new adaptation primitive; rather, it studies and demonstrates how learned KV prefixes can scale in-context learning into a reliable and efficient mechanism for reasoning.\looseness=-1

\section{Experimental Results}

Our experiments address the following key question:
\emph{Can reasoning capabilities traditionally learned via in-weight adaptation instead be acquired through in-context mechanisms, without modifying pretrained model parameters and without overloading the context?} We answer this question by evaluating \algoname across two primary dimensions: (1) \emph{accuracy} on challenging reasoning benchmarks, comparing against both in-context and in-weight baselines (\Cref{sec:main results}); and (2) \emph{efficiency} across three axes: data efficiency (\Cref{sec:sample efficiency}), inference efficiency (\Cref{sec:inference efficiency}), and parameter efficiency (\Cref{sec:parameter efficiency}). Ablation studies on design choices are deferred to~\Cref{sec:ablations}.

\subsection{Experimental Setup and Datasets}
\textbf{Models and Datasets.} We evaluate methods across both short- and long-form reasoning tasks. 
For short-reasoning tasks, we adapt a LLaMA-2 on MetaMathQA~\citep{yu2023metamath} and evaluate on GSM8K~\citep{cobbe2021training} and MATH~\citep{hendrycks2020measuring}. For long-form reasoning, we adapt a Qwen-2.5-7B-Instruct~\citep{yang2024qwen2} on a filtered subset of OpenThoughts-3~\citep{guha2025openthoughts} and evaluate on GPQA-Diamond~\citep{rein2024gpqa} and AIME ~\citep{MAA2024AIMEI,MAA2025AIMEI}.
We construct the OpenThoughts-3 subset by retaining only examples whose reasoning traces fit within a 4096-token budget, matching the controlled-context setting of prior work~\citep{guha2025openthoughts}.
At inference, we cap model ``thinking'' to 4096 tokens via the decoding-time intervention of Muennighoff et al.~\citep{muennighoff2025s1}. Concretely, we force termination of the reasoning phase by appending the end-of-thinking delimiter and \emph{``Final Answer:''}, prompting the model to output its current best solution under the imposed constraint. For more details, refer to~\Cref{sec:evaluation}. Across all datasets, we follow the standard train/eval splits provided by the Language Model Evaluation Harness (\texttt{lm-evaluation-harness})~\citep{eval-harness}.

\textbf{Baselines.} We compare \algoname against standard \emph{in-weight} baselines (supervised fine-tuning (SFT) and LoRA) and \emph{in-context} baselines (exemplar-based in-context learning (ICL) and prompt tuning). For LoRA, we apply adapters to the attention key-value projections and sweep ranks $r\in\{2^k\}_{k=0}^7$.
For \algofullname and prompt tuning, we sweep the number of virtual tokens $m\in\{2^k\}_{k=0}^{10}$.

\textbf{Training and Evaluation Setup.} We train all methods with AdamW and a cosine learning-rate schedule (warmup ratio $0.05$), using zero weight decay. On OpenThoughts-3, we train for 13 epochs with batch size 32 and sequence length 8192; on MetaMathQA, we train for 3 epochs with batch size 128 and sequence length 2048. At inference, we evaluate with \texttt{lm-evaluation-harness} under greedy decoding (temperature $0$) and report exact-match accuracy (\texttt{pass@1}). Unless stated otherwise, methods share the same base model, tokenizer, training examples, optimization budget, and decoding configuration, and we select hyperparameters by best validation performance before reporting final test accuracy.

\begin{figure}[!htb]
    \centering
    \includegraphics[width=1.\linewidth]{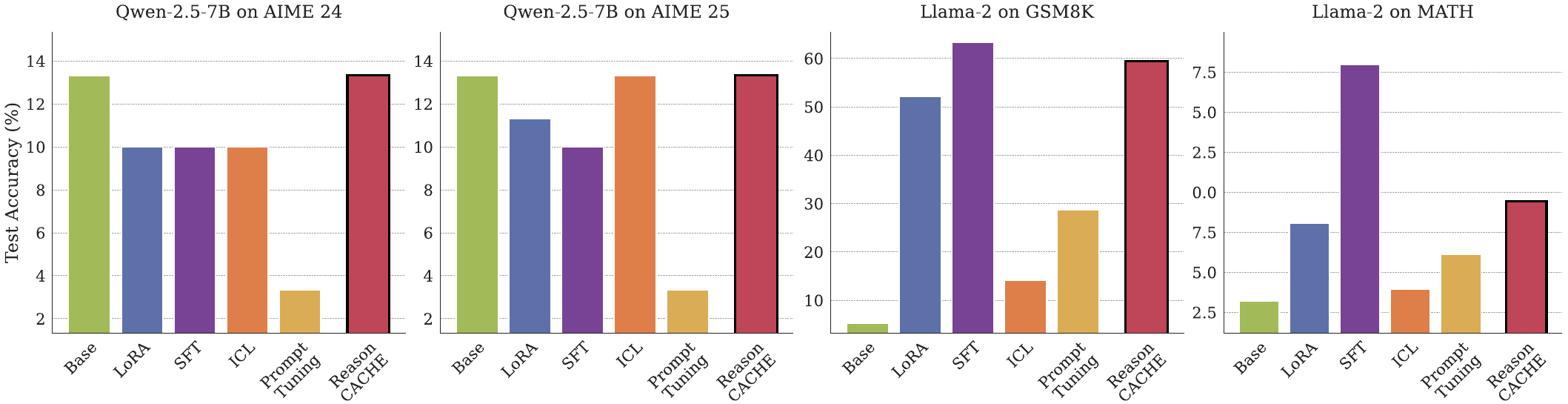}
    \caption{Test accuracy (\%) of in-context and in-weight adaptation methods across AIME 24/25, GPQA-Diamond (shown in~\Cref{fig:teaser}), GSM8K and MATH. 
    \algoname consistently outperforms in-context baselines (ICL, Prompt Tuning) and exceeds LoRA while keeping pretrained weights frozen. On GPQA-Diamond, \algoname surpasses full supervised fine-tuning (SFT).}
    \label{fig:results_main_fig}
\end{figure}

\subsection{\algofullname Enables Effective Reasoning in LLMs Without Modifying Weights}\label{sec:main results}
\textbf{\algofullname Scales In-Context Learning:} \Cref{fig:teaser}(right) and \Cref{fig:results_main_fig} summarize final test accuracy across all benchmarks. We first examine whether \algoname meaningfully extends the capabilities of standard ICL. We observe a clear limitation of \emph{shallow} contextual adaptation: both ICL and prompt tuning yield only modest gains, particularly on tasks requiring extended reasoning. In contrast, \algofullname consistently outperforms other in-context methods.

\textbf{\algofullname Outperforms In-Weight Adaptation on Reasoning Benchmarks.}
We next compare \algofullname against IWL methods. Even under matched parameter budgets, \algoname matches or beats LoRA and even full SFT (on GPQA-Diamond). Specifically, when trained on OpenThoughts-3, \algoname achieves the best performance on GPQA-Diamond ($41.92\%$), exceeding both LoRA and full SFT.

This advantage, however, does not seem to extend to AIME 24 and 25: all methods remain near the baseline ($\sim$13\%). IWL even hurts performance ($\sim$10\%), while \algoname preserves baseline accuracy. We attribute this to our 4096-token filtering, which likely underrepresents the extended reasoning that competition mathematics demands.\looseness=-1

Having established that \algoname matches or exceeds IWL methods in absolute accuracy, we next evaluate efficiency across three dimensions: how much \emph{data} is required during learning (data efficiency), how much \emph{compute} is needed at inference (inference efficiency), and how many \emph{parameters} must be stored (parameter efficiency). Across all three, \algoname exhibits superior accuracy-cost tradeoffs: it uses 59\% less data than LoRA to reach the same accuracy, achieves 90\% lower inference compute than ICL at higher accuracy, and requires 46\% fewer parameters than LoRA while maintaining better performance. 

\subsection{\algofullname is Data Efficient: Pushes the Accuracy vs Data Frontier}\label{sec:sample efficiency}

\noindent
\begin{minipage}[t]{0.52\columnwidth}
\vspace{0pt} 
\setlength{\parskip}{0pt} 
\algoname combines ICL’s data efficiency with the scalability of IWL methods. In \Cref{fig:data_efficiency_plot}, \algoname traces the Pareto frontier of accuracy versus training dataset size across nearly four orders of magnitude. In the low-data regime, \algoname matches or exceeds ICL. Initializing \algoname from few-shot exemplars preserves the inductive bias of ICL, and subsequent optimization enables further gains beyond the initial few-shot performance. In contrast, however, ICL degrades as more examples are added (longer contexts dilute attention over exemplars and exacerbate position effects), consistent with reported long-context issues in the literature~\citep{agarwal2024many,zhang2025more,zhang2025memory,liu2024lost}.\\
\par IWL methods exhibit the opposite failure mode: LoRA and SFT tend to overfit with limited data (often mitigated by heavy augmentation~\citep{yang2024synthetic,eyuboglu2025cartridges} or large-scale training). \algoname is
\end{minipage}\hfill%
\begin{minipage}[t]{0.45\columnwidth}
\vspace{0pt} 
\centering
\includegraphics[width=\linewidth]{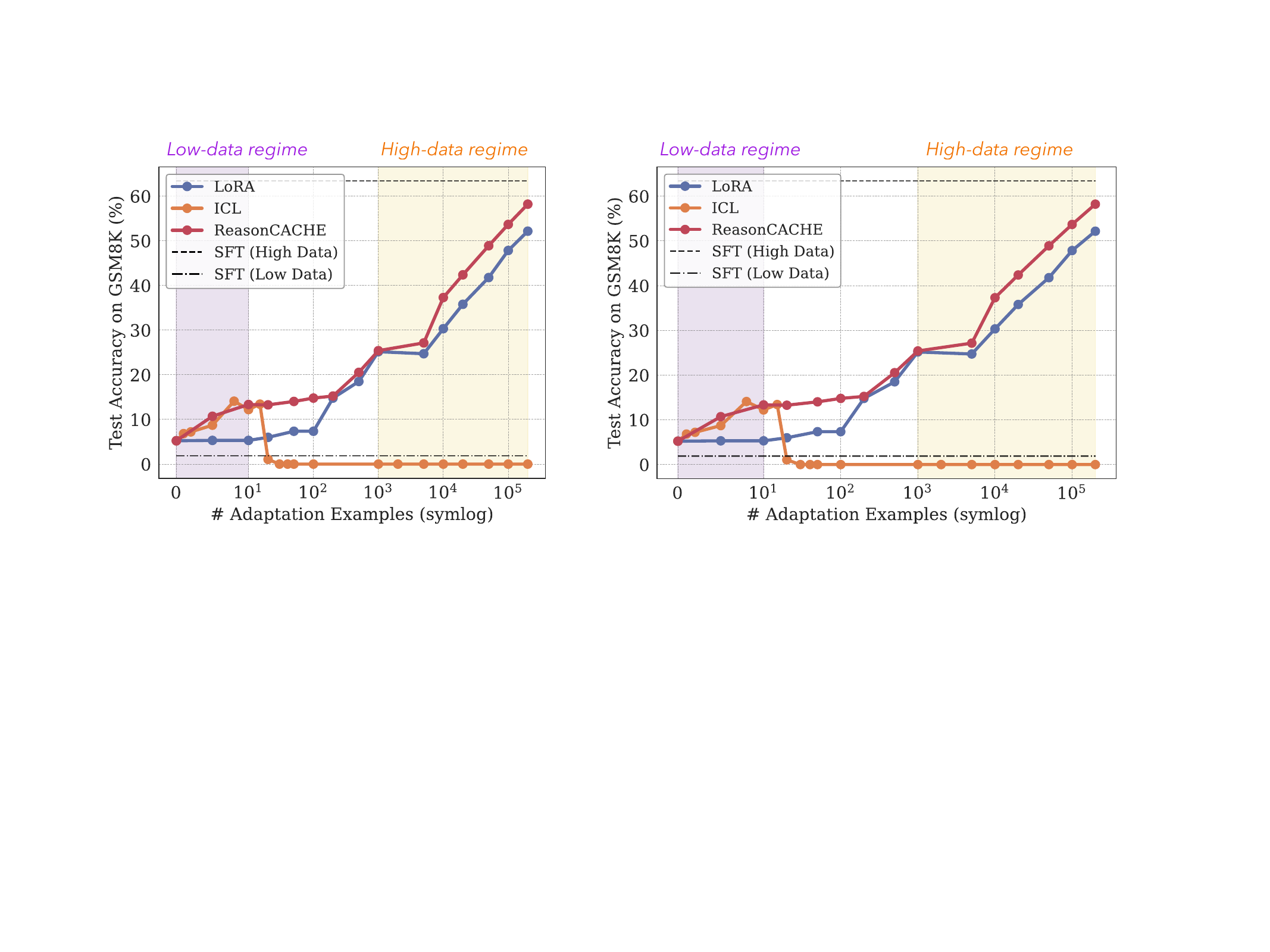}
\captionof{figure}{Accuracy as a function of training examples. \algoname matches ICL’s data efficiency at low data and continues to scale with additional data like in-weight methods.\looseness=-1}
\label{fig:data_efficiency_plot}
\end{minipage}

$\sim 59$\% more data-efficient than LoRA (to reach 50\% accuracy), consistently outperforming it throughout this regime and closing the gap to high-data SFT, despite keeping the pretrained backbone frozen. This places \algoname uniquely on the ICL–IWL spectrum: it inherits \emph{ICL’s data efficiency} and inductive bias via few-shot initialization, while gaining the \emph{data-scaling benefits of IWL methods} through gradient-based optimization. \algoname therefore can be a bridge between in-context and in-weight adaptation.\looseness=-1

\subsection{\algofullname is Efficient At Inference: Pushes the Accuracy vs Compute Frontier}\label{sec:inference efficiency}

\begin{figure}[!htb]
\centering
\begin{minipage}{0.3\textwidth}
\centering
\includegraphics[width=1.\textwidth]{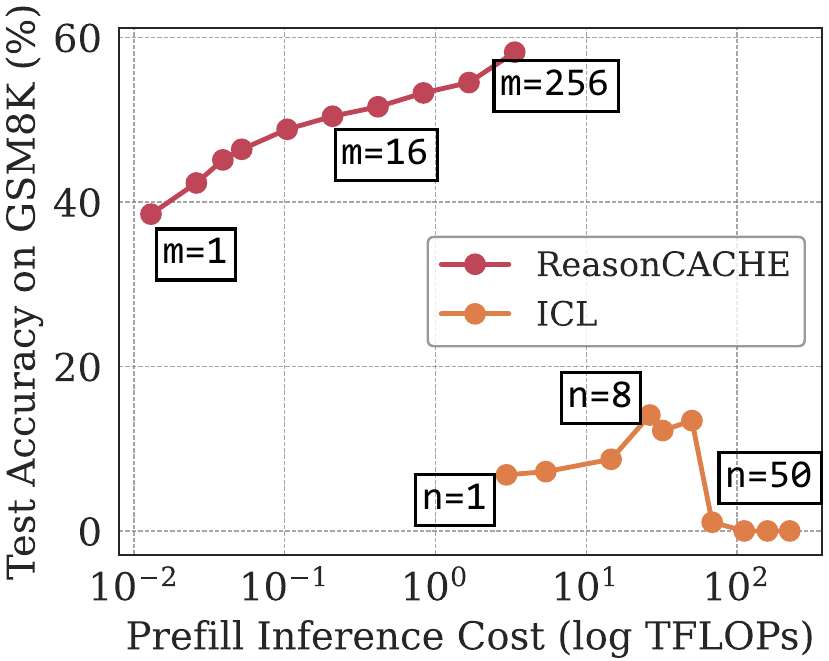}
\end{minipage}
\hfill
\begin{minipage}{0.3\textwidth}
\centering
\includegraphics[width=1.\textwidth]{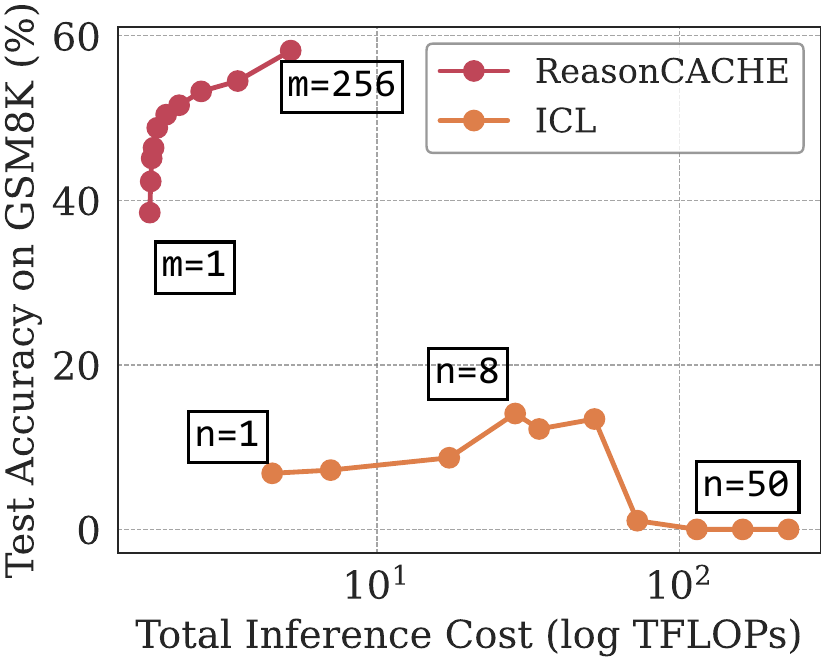}
\end{minipage}
\hfill
\begin{minipage}{0.34\textwidth}
\caption{Accuracy as a function of inference cost measured in TFLOPs. \textit{Left}: Prefill cost only. \textit{Right}: Total inference cost (prefill + decoding). \algofullname consistently dominates in-context learning, achieving better accuracy at substantially lower inference compute across both metrics. $n$ and $m$ denote the number of in-context example, and the number of tokens allocated to for \algofullname, respectively.}
\label{fig:inference_cost}
\end{minipage}
\end{figure}

A fundamental advantage of \algoname is that it decouples the learning budget (i.e., number of adaptation examples) from the inference cost. In ICL, the adaptation data must be included in the context at inference. Even a single long reasoning example (e.g., 10k tokens) directly increases the prefill cost quadratically and decoding cost linearly. \algoname removes this coupling by learning from hundreds of thousands of tokens, while using only a short compact learned prefix (length $m$) at inference. Since prefill cost scales quadratically with context length, replacing long exemplar prompts with a short prefix yields substantial savings.\footnote{For prompt length $S$ and generation length $T$, prefill cost is $O(S^2)$ and decoding cost is $O(T(S+T))$.}

\textbf{\algofullname Reduces Prefill Cost.} We first consider GSM8K, where short reasoning chains make inference cost dominated by prefill. \Cref{fig:inference_cost} reports accuracy as a function of inference compute (measured in TFLOPs) for prefill (left) and prefill+decoding (right) on GSM8K. \algoname dominates ICL, outperforming the best ICL configuration by 44.8 points while using 90\% less total inference compute. This gap is driven by prefill costs, since ICL incurs quadratic overhead as demonstrations are added, whereas \algoname replaces them with a short learned prefix. We note that IWL methods incur no additional prefill overhead at all since they modify weights rather than context. However, as shown next, \textsc{ReasonCache}’s shorter generations offset its modest prefill cost, yielding better overall inference efficiency.\looseness=-1

\begin{figure}[!htb]
    \centering
    \includegraphics[width=1.\linewidth]{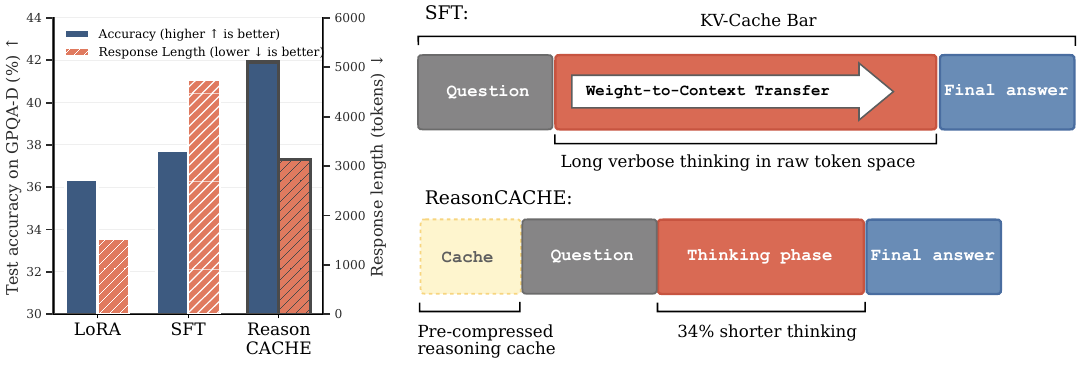}
    \caption{ (Left): On GPQA-Diamond, \algoname achieves 11\%
  higher accuracy than SFT while generating 34\% shorter responses. LoRA produces the shortest responses but at the cost of lower
  accuracy. (right): We hypothesize that SFT externalizes weight-encoded procedural knowledge into explicit context during
  generation. Although the model has internalized reasoning strategies through weight updates, it nonetheless regenerates these
  patterns as explicit token sequences at inference time, leading to unnecessary verbosity. Our conjecture is that in contrast, \algoname stores
  procedural knowledge directly in the KV cache, eliminating the need for explicit externalization.}
    \label{fig:verbosity}
\end{figure}

\textbf{\algofullname Reduces Generation Length.}
On long-form reasoning benchmarks such as GPQA-Diamond, models often generate thousands of reasoning tokens, causing decoding to dominate the inference cost.
Interestingly, \algoname attains higher accuracy while producing substantially shorter generations (\Cref{fig:verbosity}). Compared to SFT, \algoname reduces generation length by 34\% while improving accuracy by about $11\%$. LoRA produces shorter outputs but underperforms noticeably on this benchmark. 
We hypothesize that the learned prefix likely encodes task guidance; instead of spending tokens to re-derive the approach step-by-step, the model conditions on this information during the forward pass and can proceed more directly to the answer. In-weight methods, by contrast, must “unpack” their learned knowledge from weights to the context as shown in~\Cref{fig:verbosity} (right). As we formalize in \Cref{sec:theory}, weight updates influence the KV-cache only through the input tokens, so exploiting learned knowledge requires sufficiently rich context in input.

\subsection{\algofullname is Parameter Efficient: Pushes Accuracy vs Parameter Frontier}\label{sec:parameter efficiency}

\begin{minipage}[t]{0.58\columnwidth}
\vspace{0pt} 
\setlength{\parskip}{0pt} 
Finally, we examine \emph{parameter} efficiency, which captures the cost of \emph{storing} and transferring an adaptation once it has been learned.
Under matched parameter budgets, \algoname consistently dominates LoRA across the entire accuracy–parameter curve on GSM8K (\Cref{fig:acc_vs_params}). For example, to reach a target test accuracy of 50\%, \algoname requires 46\% fewer parameters than LoRA.
\algoname is more parameter-efficient in the low-budget regime and continues to improve as the budget increases, while LoRA saturates early.
This trend aligns with our theoretical analysis in~\Cref{sec:theory}: by injecting learned key--value prefixes at every layer, \algoname can steer attention along directions that low-rank weight updates cannot realize even at a \emph{higher} parameter budget.\\

A note on parameter counting: When \algoname uses the MLP reparameterization (see~\Cref{sec:ablations}), the MLP serves only to stabilize training and is discarded at deployment, leaving only the generated prefix vectors; we therefore report only the deployed parameters (the prefix vectors) in \Cref{fig:acc_vs_params}.
\end{minipage}\hfill%
\begin{minipage}[t]{0.38\columnwidth}
\vspace{0pt} 
\centering
\includegraphics[width=\linewidth]{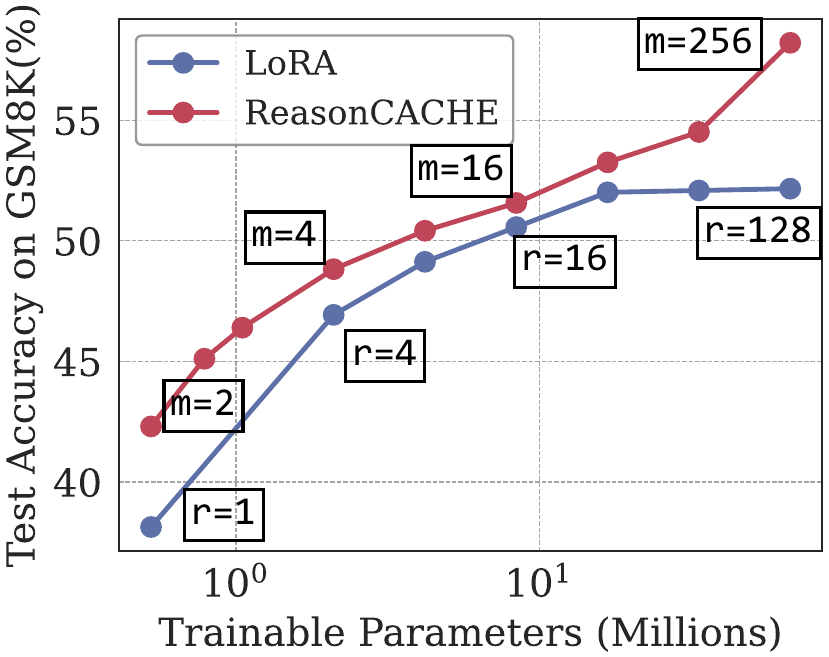}
\captionof{figure}{\algoname achieves consistently higher test accuracy at matched parameter budgets and continues to improve where LoRA saturates.\looseness=-1}
\label{fig:acc_vs_params}
\end{minipage}

\section{Theoretical Perspectives}
\label{sec:theory}

The preceding experiments demonstrate that \algofullname can match or exceed its in-weight counterpart, LoRA, on challenging reasoning benchmarks (\Cref{sec:main results}). 
What explains this?
In this section, we show that \algofullname (and PT broadly) and LoRA are governed by fundamentally different expressivity bottlenecks: each method can, in certain regimes, access subspaces in value space that are unreachable by the other. Since this analysis is not specific to reasoning, we present it in the language of \emph{Prefix Tuning} (PT) for clarity, using PT notation throughout. We characterize exactly when each method is more expressive and provide geometric intuition for the comparison. All proofs are deferred to \Cref{sec:proofs}.

\subsection{Setting and Notation}

We adopt the notation of \Cref{sec:pt}. Consider a single attention layer with frozen value projection $W_V \in \mathbb{R}^{d \times d}$. Given context embeddings $X \in \mathbb{R}^{n \times d}$, the base model produces value vectors $v_1, \ldots, v_n$ where $v_i = x_i W_V$. We define 
\[
S_X := \mathrm{span}\{v_1, \ldots, v_n\} \subseteq \mathbb{R}^d,
\]
which captures all directions already expressible by the base model on this context. Let $\Pi_X$ denote the orthogonal projector onto the complement $S_X^\perp$, and define:
\[
t_X := \rank(X), \qquad \nu_X := \dim S_X^\perp = d - \dim S_X.
\]
Here, $t_X$ is the rank of the input sequence and $\nu_X$ as the \emph{novelty capacity}, defined as the dimension of the subspace orthogonal to what the base model can already express. Intuitively, $\nu_X$ quantifies the remaining ``headroom'' in representation space for adding directions that are not already induced by the input context.

A LoRA update modifies the value projection to $W_V' = W_V + \Delta_V$ with $\rank(\Delta_V) \le r$, producing updated values $v_i' = x_i(W_V + \Delta_V)$. Prefix tuning instead contributes learned prefix values $P_V \in \mathbb{R}^{m \times d}$, which are concatenated with the token-derived values. Since the attention output is a convex combination of value vectors, it cannot produce directions outside the span of those values. The key comparison is therefore between the novelty subspaces, i.e., the new directions each method can add beyond $S_X$:
\[
\Pi_X \,\mathrm{span}\{v_1', \ldots, v_n'\} \quad \text{(LoRA)} \qquad \text{vs.} \qquad \Pi_X \,\mathrm{span}\{(P_V)_1, \ldots, (P_V)_m\} \quad \text{(PT)}.
\]
We refer to these subspaces as \emph{novelty subspaces} since they capture directions that are not realizable by the base model under the given context.

\subsection{Characterizing Achievable Novelty Subspaces}

We first characterize exactly which novelty subspaces each method can realize.

\begin{restatable}{proposition}{loraprop}
\label{prop:lora}
(LoRA Novelty Subspaces). At a fixed context $X$, a subspace $U \subseteq S_X^\perp$ is realizable as $\Pi_X \,\mathrm{span}\{v_1', \ldots, v_n'\}$ for some value update $\Delta_V$ with $\rank(\Delta_V) \le r$ if and only if
\[
\dim U \le \min\{t_X, r\}.
\]
\end{restatable}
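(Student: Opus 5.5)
The plan is to prove the two directions separately, using the identity $\mathrm{span}\{v_1',\ldots,v_n'\} = \mathrm{row}(X(W_V+\Delta_V))$. Since $\Pi_X$ is linear and kills $S_X = \mathrm{row}(XW_V)$, we obtain
\[
\Pi_X\,\mathrm{span}\{v_1',\ldots,v_n'\} = \mathrm{row}\bigl(X(W_V+\Delta_V)\Pi_X\bigr) = \mathrm{row}\bigl(X\Delta_V\Pi_X\bigr),
\]
because $XW_V\Pi_X = 0$. Here we view $\Pi_X$ as a symmetric $d\times d$ matrix acting on row vectors from the right. Everything therefore reduces to describing which row spaces $\mathrm{row}(X\Delta_V\Pi_X)$ are attainable.

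\textbf{Necessity.} We have $\dim \mathrm{row}(X\Delta_V\Pi_X) = \rank(X\Delta_V\Pi_X) \le \min\{\rank X, \rank \Delta_V\} \le \min\{t_X, r\}$. This is simply submultiplicativity of rank, and this bound is precisely the ``carrier bottleneck''.

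\textbf{Sufficiency.} Fix $U\subseteq S_X^\perp$ with $k := \dim U \le \min\{t_X,r\}$, and choose an orthonormal basis $u_1,\ldots,u_k$ of $U$. Since $\rank X = t_X \ge k$, the row space of $X$ has dimension at least $k$, so we may pick vectors $a_1,\ldots,a_k\in\mathbb{R}^d$ such that the vectors $Xa_1,\ldots,Xa_k\in\mathbb{R}^n$ are linearly independent. To do this, take $a_j$ to be right singular vectors of $X$ with nonzero singular values. Set
\[
\Delta_V = \sum_{j=1}^k a_j u_j^\top,
\]
which has rank $k\le r$. Then $X\Delta_V = \sum_j (Xa_j)u_j^\top$. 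Because $u_j\in S_X^\perp$, we have $u_j^\top\Pi_X = u_j^\top$, and hence $X\Delta_V\Pi_X = \sum_j (Xa_j)u_j^\top$. The row space of this matrix is exactly $U$: it is contained in $\mathrm{span}\{u_j\}$, and its rank equals $k$ because both $\{Xa_j\}$ and $\{u_j\}$ are linearly independent families. For the edge case $U=\{0\}$ we take $\Delta_V=0$.

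I expect the main point requiring care to be the reduction step, namely correctly justifying that projecting the updated span equals the row space of $X\Delta_V\Pi_X$. This relies on linearity of $\Pi_X$ together with $S_X = \mathrm{row}(XW_V)$. A second point is making sure the notion of ``realizable'' matches the definition of the novelty subspace given in the statement, in particular that it means exact equality rather than containment. Apart from these, the argument is elementary linear algebra.
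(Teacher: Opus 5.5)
Your proposal is correct and follows the paper's proof: necessity comes from rank submultiplicativity applied to $X\Delta_V$, and sufficiency uses the same rank-$k$ update $\Delta_V = CB^\top$ (your $\sum_j a_j u_j^\top$), where $C$ consists of right singular vectors of $X$ and $B$ is an orthonormal basis of $U$. Your explicit reduction $\Pi_X\,\mathrm{span}\{v_1',\ldots,v_n'\} = \mathrm{row}(X\Delta_V\Pi_X)$, obtained from $XW_V\Pi_X = 0$, is slightly more careful than the paper, which in the sufficiency direction writes $\Pi_X\,\mathrm{span}\{v_1',\ldots,v_n'\} = \Pi_X(U)$ without explicitly removing the $XW_V$ term.
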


\begin{restatable}{proposition}{ptprop}
\label{prop:pt}
(Prefix Tuning Novelty Subspaces). At a fixed context $X$, a subspace $U \subseteq S_X^\perp$ is realizable as $\Pi_X \,\mathrm{span}\{v_1', \ldots, v_n'\}$ for some value update $\Delta_V$ with $\rank(\Delta_V) \le r$ if and only if
\[
\dim U \le \min\{t_X, r\}.
\]
\end{restatable}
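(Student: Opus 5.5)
As displayed, \Cref{prop:pt} has exactly the same content as \Cref{prop:lora}: it characterizes $\Pi_X\,\mathrm{span}\{v_1',\dots,v_n'\}$ for a rank-$\le r$ value update. I will therefore prove that statement directly. The approach is pure linear algebra: identify the novelty subspace with a row space, bound its dimension by a rank inequality, and then build an explicit rank-$k$ update that realizes any admissible $U$.

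\textbf{Reduction.} Write $v_i' = x_i W_V + x_i\Delta_V$. Since $x_iW_V = v_i \in S_X$ and $\Pi_X$ annihilates $S_X$, linearity of $\Pi_X$ gives
\[
\Pi_X\,\mathrm{span}\{v_1',\dots,v_n'\} = \Pi_X\,\mathrm{span}\{x_1\Delta_V,\dots,x_n\Delta_V\} = \Pi_X\,\mathrm{row}(X\Delta_V).
\]
So the whole question is which subspaces arise as $\Pi_X\,\mathrm{row}(X\Delta_V)$.

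\textbf{Necessity.} A linear image cannot increase dimension, so
\[
\dim \Pi_X\,\mathrm{row}(X\Delta_V) \le \rank(X\Delta_V) \le \min\{\rank X,\ \rank \Delta_V\} \le \min\{t_X, r\}.
\]
Hence every realizable $U$ satisfies $\dim U\le \min\{t_X,r\}$.

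\textbf{Sufficiency.} Let $U\subseteq S_X^\perp$ with $k=\dim U\le\min\{t_X,r\}$, and fix a basis $u_1,\dots,u_k$ of $U$, viewed as row vectors.
\begin{enumerate}
\item Because the column space of $X$ has dimension $t_X\ge k$, choose column vectors $z_1,\dots,z_k\in\mathbb{R}^d$ so that $Xz_1,\dots,Xz_k\in\mathbb{R}^n$ are linearly independent. For example, take preimages of $k$ independent columns of $X$.
\item Set $\Delta_V=\sum_{j=1}^k z_j u_j$. This is a sum of $k$ rank-one matrices, so $\rank\Delta_V\le k\le r$.
\item Then $X\Delta_V=\sum_j (Xz_j)\,u_j$. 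Since the $Xz_j$ are independent, choose vectors $a_j$ with $a_j^\top X z_l=\delta_{jl}$. Left-multiplying by $a_j^\top$ shows $u_j\in\mathrm{row}(X\Delta_V)$, and the reverse inclusion is immediate from the formula. Hence $\mathrm{row}(X\Delta_V)=U$.
\item Because $U\subseteq S_X^\perp$, we have $\Pi_X U=U$, so $U$ is realized.
\end{enumerate}

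The only step needing care is step 3: both the row-space identity and the choice of $z_j$ rely on $k\le t_X$. This is precisely where the input rank enters as the ``carrier bottleneck.''

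If the intended PT statement is instead that $U$ is realizable as $\Pi_X\,\mathrm{span}\{(P_V)_p\}$ iff $\dim U\le\min\{m,\nu_X\}$, the same template applies and is simpler. Necessity follows because $\Pi_X\,\mathrm{row}(P_V)$ has dimension at most $m$ and lies in $S_X^\perp$, which has dimension $\nu_X$. Sufficiency follows by taking the rows of $P_V$ to be $u_1,\dots,u_k$ followed by zero rows. No dependence on $t_X$ appears, because the prefix rows are free parameters.
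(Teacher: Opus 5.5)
Your proof is correct and follows the paper's route. Your main argument is the paper's proof of Proposition~\ref{prop:lora}: a rank inequality for necessity, and a rank-$k$ update $\Delta_V=\sum_j z_j u_j$ for sufficiency, in place of the paper's $CB^\top$ built from SVD columns, with your reduction $\Pi_X\,\mathrm{span}\{v_i'\}=\Pi_X\,\mathrm{row}(X\Delta_V)$ and left-inverse argument filling in steps the paper leaves implicit. The paper's proof under this label actually establishes the intended prefix version $\dim U\le m$, using exactly the zero-padded-basis construction in your final paragraph.
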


\Cref{prop:lora} and \Cref{prop:pt} state that both methods can realize \emph{arbitrary} sets of new directions outside the base model’s context-induced span; the difference is the bottleneck on how many such directions they can realize. For LoRA, the number of independent new directions is jointly limited by (i) how many independent directions the input context provides and (ii) the adapter’s rank. For prefix tuning, the directions are likewise unconstrained, but their number is limited only by the number of prefix vectors.

\begin{remark}
 \Cref{prop:lora} reveals that the context $X$ creates a bottleneck at $\min\{t_X, r\}$ for LoRA, resulting in what we call \emph{carrier bottleneck}. Even with arbitrarily large adapter rank $r$, LoRA cannot produce novelty subspaces of dimension exceeding the context rank $t_X$. In contrast, \Cref{prop:pt} shows that PT is \emph{prefix-limited} only by $m$, the number of prefix vectors, which contribute directly without passing through the context. 
\end{remark}

\subsection{Characterizing the Expressivity of LoRA and Prefix Tuning}

We now compare the entire families of achievable novelty subspaces. Define:
\begin{align*}
\mathcal{L}(r) &:= \{\Pi_X \,\mathrm{span}\{v_1', \ldots, v_n'\} : \rank(\Delta_V) \le r\}, \\
\mathcal{P}(m) &:= \{\Pi_X \,\mathrm{span}\{(P_V)_1, \ldots, (P_V)_m\} : P_V \in \mathbb{R}^{m \times d}\}.
\end{align*}

\begin{restatable}{theorem}{mainthm}
\label{thm:main}
(Expressivity Comparison). At a fixed context $X$, define $D_{\text{LoRA}} := \min\{t_X, r, \nu_X\}$ and $D_{\text{PT}} := \min\{m, \nu_X\}$. Then:
\begin{enumerate}
    \item[(i)] $\mathcal{L}(r) \subseteq \mathcal{P}(m)$ if and only if $D_{\text{LoRA}} \le D_{\text{PT}}$.
    \item[(ii)] $\mathcal{P}(m) \subseteq \mathcal{L}(r)$ if and only if $D_{\text{PT}} \le D_{\text{LoRA}}$.
\end{enumerate}
Consequently, strict inclusion $\mathcal{L}(r) \subset \mathcal{P}(m)$ holds if and only if $D_{\text{LoRA}} < D_{\text{PT}}$ and equality $\mathcal{L}(r) = \mathcal{P}(m)$ holds if and only if $D_{\text{LoRA}} = D_{\text{PT}}$ and vice versa.
\end{restatable}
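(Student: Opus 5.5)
The plan is to reduce both families to one simple combinatorial object: the set of \emph{all} subspaces of $S_X^\perp$ up to a given dimension. Concretely, I aim to show
\[
\mathcal{L}(r) = \{U \subseteq S_X^\perp : \dim U \le D_{\text{LoRA}}\}, \qquad \mathcal{P}(m) = \{U \subseteq S_X^\perp : \dim U \le D_{\text{PT}}\}.
\]
Once these two identities hold, the theorem reduces to comparing two members of a nested family.

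\textbf{The LoRA identity.} This comes directly from \Cref{prop:lora}. It says a subspace $U \subseteq S_X^\perp$ lies in $\mathcal{L}(r)$ iff $\dim U \le \min\{t_X, r\}$. Every $U \subseteq S_X^\perp$ automatically satisfies $\dim U \le \nu_X$. So the condition is equivalent to $\dim U \le \min\{t_X, r, \nu_X\} = D_{\text{LoRA}}$.

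\textbf{The PT identity.} \Cref{prop:pt} as printed repeats the LoRA condition, but the intended content (per the surrounding discussion) is $\dim U \le m$. I would prove this directly rather than rely on it.
\begin{itemize}
\item \emph{Necessity.} $\Pi_X$ is linear, so $\dim \Pi_X \mathrm{span}\{(P_V)_1,\dots,(P_V)_m\} \le \dim\mathrm{span}\{(P_V)_p\} \le m$. The image also lies in $S_X^\perp$, so its dimension is at most $\nu_X$.
\item \emph{Sufficiency.} Take $U \subseteq S_X^\perp$ with $\dim U = k \le \min\{m,\nu_X\}$ and a basis $u_1,\dots,u_k$ of $U$. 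Set the first $k$ rows of $P_V$ to $u_1,\dots,u_k$ and the remaining $m-k$ rows to $0$. Since $\Pi_X u_i = u_i$ for each $i$, the projected span is exactly $U$. The case $U=\{0\}$ is covered by $P_V=0$.
\end{itemize}

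\textbf{The comparison.} For integers $0 \le a, b \le \nu_X$, let $\mathcal{F}_k := \{U \subseteq S_X^\perp : \dim U \le k\}$. I claim $\mathcal{F}_a \subseteq \mathcal{F}_b$ iff $a \le b$.
\begin{itemize}
\item If $a \le b$, the inclusion is immediate.
\item If $a > b$, pick any $a$-dimensional subspace of $S_X^\perp$. It exists because $a \le \nu_X$, and it lies in $\mathcal{F}_a \setminus \mathcal{F}_b$.
\end{itemize}
Both $D_{\text{LoRA}}$ and $D_{\text{PT}}$ are at most $\nu_X$, so this claim yields (i) and (ii). Combining (i) and (ii) gives the consequences:
\begin{itemize}
\item equality holds iff $D_{\text{LoRA}} = D_{\text{PT}}$;
\item strict inclusion $\mathcal{L}(r) \subset \mathcal{P}(m)$ holds iff $D_{\text{LoRA}} \le D_{\text{PT}}$ and $D_{\text{PT}} \not\le D_{\text{LoRA}}$, i.e.\ iff $D_{\text{LoRA}} < D_{\text{PT}}$;
\item the reverse strict inclusion holds symmetrically.
\end{itemize}

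\textbf{Main obstacle.} The substantive step is making sure the characterizations are exact, including the truncation by $\nu_X$. If that cap were not built into $D_{\text{LoRA}}$ and $D_{\text{PT}}$, two different nominal bounds, say $t_X, r, m$ all exceeding $\nu_X$, would give the same family. Then the ``only if'' directions would fail. Including $\nu_X$ in both quantities is precisely what makes the map $k \mapsto \mathcal{F}_k$ injective on the relevant range.

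The other delicate point is the misprinted \Cref{prop:pt}. I would handle it with the self-contained argument above rather than citing the proposition as stated.
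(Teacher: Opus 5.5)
Your proposal is correct and follows essentially the same route as the paper. Like the paper, you identify $\mathcal{L}(r)$ and $\mathcal{P}(m)$ with the families of all subspaces of $S_X^\perp$ of dimension at most $D_{\text{LoRA}}$ and $D_{\text{PT}}$ via Propositions~\ref{prop:lora} and~\ref{prop:pt}, and then compare the caps in the nested hierarchy. Your two additions make explicit what the paper's ``nested hierarchy'' step leaves implicit: you re-derive the PT characterization $\dim U \le m$, which is what the paper's proof of Proposition~\ref{prop:pt} actually establishes despite the misprinted statement, and you justify that $k \mapsto \mathcal{F}_k$ is injective for $k \le \nu_X$.
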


\begin{figure}[!htb]
    \centering
    \includegraphics[width=0.8\linewidth]{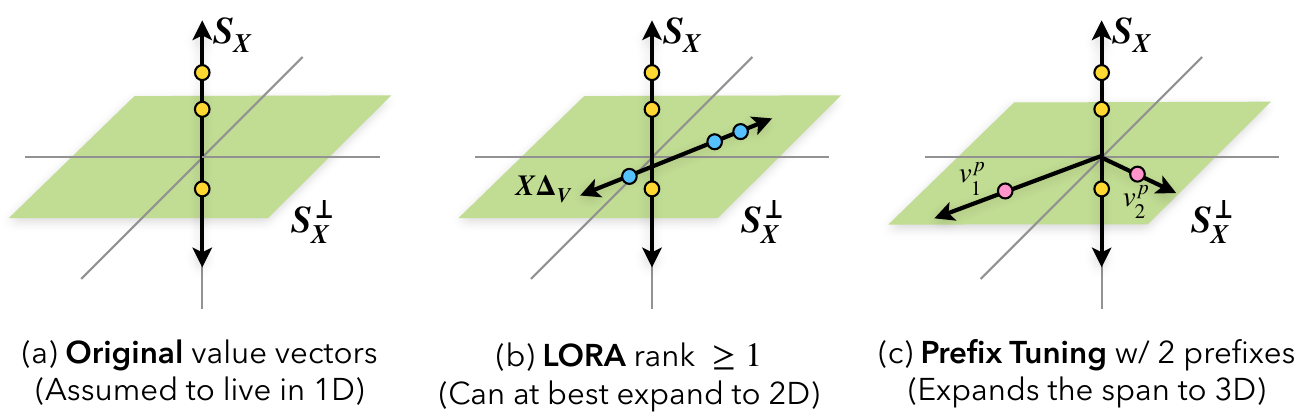}
    \caption{Prefix Tuning can span directions unreachable by LoRA. Assume the context $X$ has rank one ($t_X = 1$). The original value vectors span a 1D subspace $S_X$ (vertical axis); any new information must lie in the orthogonal novelty space $S_X^\perp$ (horizontal plane). {(a)} The base model is confined to $S_X$. {(b)} A LoRA update introduces novelty via $X \Delta_V$, but is carrier-limited by the context's rank, adding at most one new dimension regardless of the adapter rank $r$. {(c)} Prefix tuning directly injects $m$ learnable vectors. With $m = 2$, it can span the full 2D novelty space $S_X^\perp$, which LoRA cannot reach. This separation is not specific to the example: whenever $t_X < m$, PT accesses strictly more directions than LoRA.\looseness=-1}
    \label{fig:pt-expansion}
\end{figure}

\textbf{Interpretation.}
\Cref{thm:main} reveals that the expressive power of each method is governed merely by the prefix length $m$ and the LoRA bottleneck $\min\{t_X, r\}$, subject to the novelty capacity $\nu_X$.

\emph{PT is strictly more expressive} when $m > \min\{t_X, r\}$. This occurs in two scenarios: (i) when a low-rank context carrier limits the LoRA update, regardless of the adapter rank $r$; or (ii) when a low-rank adapter is chosen for parameter efficiency, and its rank is smaller than the prefix length. In these regimes, PT can realize novelty subspaces that no LoRA configuration can match.  

Note that, in contemporary PEFT practice, prefix lengths are often chosen to be moderately large, ranging from tens to thousands of tokens (e.g., $m\in[16,4096]$). LoRA however typically employs small adapter ranks most commonly $r\in[4,128]$). As a result, for sufficiently diverse and long inputs, the condition $m>\min\{t_X,r\}$ frequently holds, placing many practical deployments in the regime where prefix tuning enjoys a strict expressivity advantage.

\emph{LoRA is strictly more expressive} when $\min\{t_X, r\} > m$, that is, when a high-rank context combined with a sufficiently large adapter rank provides a wider bottleneck than the prefix length.

\emph{Both methods are equivalent} when the novelty capacity $\nu_X$ is the binding constraint. If $\nu_X < m$ and $\nu_X < \min\{t_X, r\}$, both methods saturate at $D_{\text{LoRA}} = D_{\text{PT}} = \nu_X$, and the choice reduces to other considerations such as computational efficiency.

\textbf{Connection to Chain-of-Thought Reasoning.} A useful implication of our analysis is that the parameter $t_X$ depends only on the \emph{context} available to the attention layer, regardless of whether that context comes from the original prompt or from tokens generated by the model itself. Producing a chain-of-thought expands this context with additional tokens, and can therefore increase $t_X$, enlarging the value span $S_X$ and the set of directions available for subsequent computation. Through \Cref{prop:lora}--\Cref{thm:main}, this offers a plausible geometric view of why chain-of-thought often improves accuracy~\citep{wei2021finetuned}; by increasing $t_X$, it relaxes the input bottleneck that limits how many new directions LoRA can induce on a given input and expands the attainable output space. 

\vspace{5mm}

\begin{corollary}[PT Strictly More Expressive than QK-LoRA]
\label{cor:qk}
For LoRA applied only to query and key matrices, the value update rank is $r = 0$, giving $D_{\text{LoRA}} = 0$. If the novelty space is not saturated ($\nu_X > 0$) and at least one prefix token is used ($m \ge 1$), then $D_{\text{PT}} \ge 1 > D_{\text{LoRA}}$, implying $\mathcal{L}(0) = \{\{0\}\} \subset \mathcal{P}(m)$.
\end{corollary}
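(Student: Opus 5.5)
The plan is to specialize \Cref{thm:main} to $r=0$ and then check strictness directly. Setting $r=0$ in the definition of $D_{\text{LoRA}}$ gives $D_{\text{LoRA}} = \min\{t_X, 0, \nu_X\} = 0$, since $t_X,\nu_X \ge 0$. On the prefix side, $m \ge 1$ and $\nu_X \ge 1$ give $D_{\text{PT}} = \min\{m,\nu_X\} \ge 1$. Hence $D_{\text{LoRA}} < D_{\text{PT}}$, and the strict-inclusion clause of \Cref{thm:main} yields $\mathcal{L}(0) \subset \mathcal{P}(m)$.

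It remains to identify $\mathcal{L}(0)$ explicitly and to justify that QK-LoRA corresponds to $r=0$.

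\begin{itemize}
\item \textbf{QK-LoRA is the case $r=0$.} A LoRA update restricted to $W_Q$ and $W_K$ leaves $W_V$ untouched. This is the same as $\Delta_V = 0$, which is the only matrix of rank $\le 0$.
\item \textbf{Computing $\mathcal{L}(0)$.} With $\Delta_V=0$ the updated values are $v_i' = x_i W_V = v_i \in S_X$. Since $\Pi_X$ projects onto $S_X^\perp$, it annihilates every $v_i'$, so $\Pi_X\,\mathrm{span}\{v_i'\} = \{0\}$. Thus $\mathcal{L}(0) = \{\{0\}\}$.
\end{itemize}

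One point needs care. Changing $W_Q, W_K$ alters the attention weights, but the novelty subspace depends only on the span of the value vectors, so the query/key update cannot affect it. This is the only place where the reduction to $r=0$ requires comment.

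As a sanity check that does not rely on \Cref{thm:main}, the strictness can be exhibited directly. Since $\nu_X \ge 1$, pick a unit vector $u \in S_X^\perp$ and set the first prefix value $(P_V)_1 = u$, with the remaining prefix values equal to zero. Then $\Pi_X\,\mathrm{span}\{(P_V)_p\} = \mathrm{span}\{u\}$, a nonzero subspace. So $\mathcal{P}(m)$ contains an element outside $\mathcal{L}(0)$.

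Conversely, choosing $P_V = 0$ realizes $\{0\} \in \mathcal{P}(m)$, which gives $\mathcal{L}(0) \subseteq \mathcal{P}(m)$. Together these confirm $\mathcal{L}(0) \subsetneq \mathcal{P}(m)$. There is no substantive obstacle here.
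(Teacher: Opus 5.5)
Your proposal is correct and follows the paper's route: the corollary is \Cref{thm:main} specialized to $r=0$, giving $D_{\text{LoRA}}=\min\{t_X,0,\nu_X\}=0<1\le\min\{m,\nu_X\}=D_{\text{PT}}$. Your explicit computation $\mathcal{L}(0)=\{\{0\}\}$ from $\Delta_V=0$ and your direct witness $(P_V)_1=u\in S_X^\perp$ fill in details the paper leaves implicit, and they also make the strictness independent of \Cref{prop:pt}, whose printed statement is accidentally a copy of \Cref{prop:lora}.
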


This corollary highlights a fundamental difference in mechanism: LoRA on Q/K matrices can only \emph{re-weight} existing value vectors within their original span $S_X$, whereas prefix tuning can \emph{introduce entirely new vectors} into the attention mechanism's value dictionary, enabling outputs in directions previously unavailable.

\subsection{Empirical Evidence for Expressivity via Representation Geometry}
We empirically validate our analysis above by probing the representation geometry (details in \Cref{sec:validate_theory}). 
First, across attention heads, the pretrained model’s value vectors occupy a low-dimensional subspace, revealing substantial unused headroom in value space (\Cref{fig:effective_dimension}). 
\algoname exploits this headroom since learned prefix values place most of their energy (typically 70--90\%) outside the dominant value subspace, injecting directions the base computation scarcely uses rather than merely reweighting it (\Cref{fig:prefix_energy}). 
These directions further persist downstream: \algoname'd models exhibit higher effective rank in last-layer representations than both supervised fine-tuning and LoRA, corroborating our layer-level expressivity mechanism despite the intervening nonlinearities across layers.

\begin{figure}[!htb]
    \centering
    \begin{minipage}{0.30\textwidth}
    \centering
    \includegraphics[width=\textwidth]{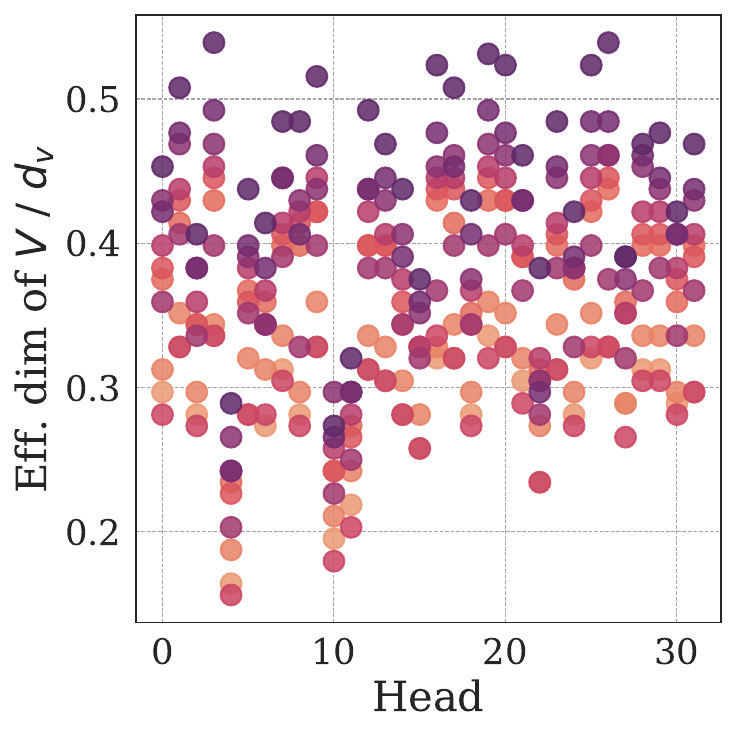}
    \caption{Effective dimension of value vectors across heads. Most heads use less than half of their available dimension, showing that the base model operates in a compact subspace.}
    \label{fig:effective_dimension}
    \end{minipage}
    \hfill
    \begin{minipage}{0.30\textwidth}
    \centering
    \includegraphics[width=\textwidth]{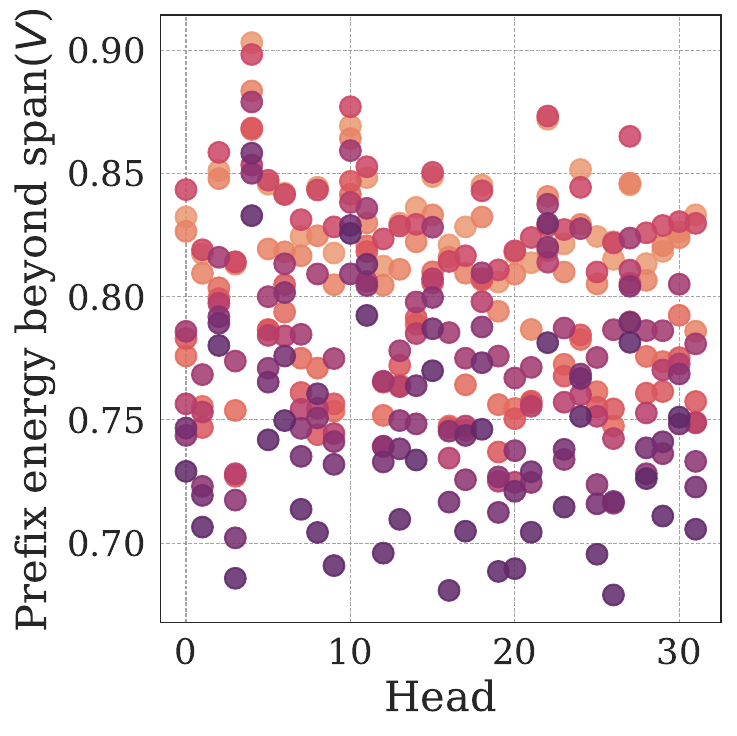}
    \caption{Fraction of prefix energy outside the value subspace of the base model. Prefixes place 80\% of their mass beyond this span, exploiting directions unused by the base model.\looseness=-1}
    \label{fig:prefix_energy}
    \end{minipage}
    \hfill
    \begin{minipage}{0.35\textwidth}
    \centering
    \includegraphics[width=\textwidth]{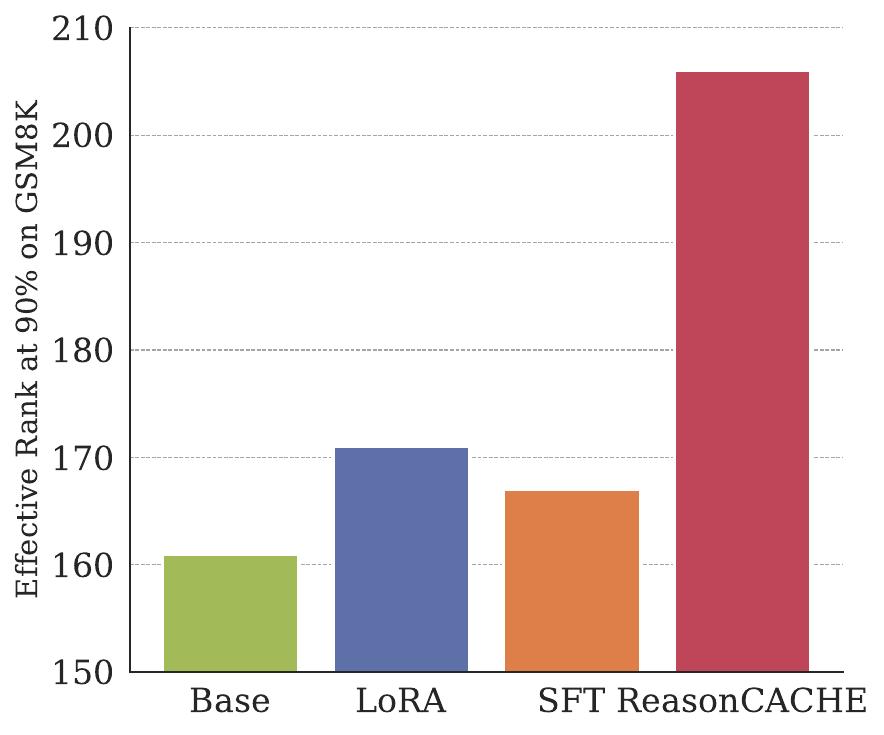}
    \caption{Effective rank of last-layer token representations, defined as the rank at which the cumulative singular-value mass reaches $90\%$. \algoname attains the highest effective rank among all methods.}
    \label{fig:effective_rank}
    \end{minipage}
\end{figure}

\subsection{An Illustrative Example}
\label{sec:learnability}

We now provide an example with a geometric interpretation comparing PT and LoRA, as shown in \Cref{fig:pt-expansion}. Consider a single attention layer, without nonlinearities or residual connections. Suppose the context $X$ has rank one, so all value vectors lie on a line i.e. $S_X$ is 1-dimensional.

Now consider a binary classification task: each token $i$ belongs to one of two classes, and the target output $y_i^\star$ should be $e_1$ for class 1 and $e_2$ for class 2, where $e_1, e_2$ are orthogonal unit vectors in $S_X^\perp$. Since the targets span two dimensions in $S_X^\perp$, any method that can only produce one-dimensional outputs will fail.

\textbf{Why QK-LoRA fails.} Adapting only the query and key matrices changes the attention weights but not the values. The output remains a convex combination of the original values $v_1, \ldots, v_n$, all of which lie in $S_X$. No matter how the attention is reweighted, the outputs cannot escape this 1D subspace. They thus have zero component in $S_X^\perp$, let alone the required two dimensions.

\textbf{Why QKV-LoRA fails.} Adding an update $\Delta_V$ to the value projection produces new values $v_i' = x_i(W_V + \Delta_V)$. Even if the LoRA rank $r$ is arbitrarily large, the product $X \Delta_V$ has rank at most $\rank(X) = 1$. This is because context $X$ limits what the weight update can express (\Cref{prop:lora}). The updated values span only a 1D subspace, which may have a component in $S_X^\perp$ but cannot fully span the two dimensions.

\textbf{Why \algoname succeeds.} Prefix tuning with $m = 2$ can place two prefix values $(P_V)_1 = e_1$ and $(P_V)_2 = e_2$ pointing in the two target directions. The prefix keys $(P_K)_1$ and $(P_K)_2$ can then be learned to route each token to the appropriate prefix value based on its class. This achieves the desired two-class separation without any constraint from the context's rank.

\vspace{5mm}

\begin{restatable}{theorem}{hierarchythm}
\label{thm:hierarchy}
(Loss Hierarchy). In the setting above, QK-LoRA and QKV-LoRA both incur strictly positive loss, while prefix tuning with $m = 2$ drives the loss to zero.
\end{restatable}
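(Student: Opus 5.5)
I will fix the loss as squared error, $\mathcal{L} = \sum_i \|y_i - y_i^\star\|^2$. The setting is a single attention layer with no nonlinearity and no residual connection. The context $X$ has rank one, and both classes are nonempty. Because I only need strict positivity, I will argue in terms of the component of each output in $S_X^\perp$. Writing $\Pi_X$ for the projector onto $S_X^\perp$, every target satisfies $\Pi_X y_i^\star = y_i^\star \in \{e_1,e_2\}$. Since $\|y_i - y_i^\star\|^2 \ge \|\Pi_X y_i - y_i^\star\|^2$, it suffices to show that the projected outputs $\Pi_X y_i$ cannot hit both $e_1$ and $e_2$.

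\textbf{QK-LoRA.} The values are unchanged, so each output $y_i$ is a convex combination of $v_1,\dots,v_n \in S_X$. Hence $\Pi_X y_i = 0$, and the loss is at least $\sum_i \|y_i^\star\|^2 = n > 0$. This is Corollary~\ref{cor:qk} applied at the level of outputs.

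\textbf{QKV-LoRA.} The updated values are $v_i' = x_i(W_V+\Delta_V)$, so $\Pi_X v_i' = \Pi_X x_i \Delta_V$. Because $\rank(X)=1$, the row space of $X\Delta_V$ has dimension at most one, as in Proposition~\ref{prop:lora}. So all $\Pi_X v_i'$ lie on a single line $\ell = \mathrm{span}\{u\}$. Convex combinations stay in $\ell$, so $\Pi_X y_i \in \ell$ for every $i$, whatever attention weights the updated Q/K produce. The loss is then at least $\sum_i \mathrm{dist}(y_i^\star,\ell)^2$. Taking $i$ of class 1 and $j$ of class 2, this is at least $\min_{\|u\|=1}\big(\mathrm{dist}(e_1,\ell)^2+\mathrm{dist}(e_2,\ell)^2\big)$. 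With $\ell$ unit-normed this equals $2 - \langle u,e_1\rangle^2 - \langle u,e_2\rangle^2 \ge 2 - \|u\|^2 = 1 > 0$. The key step is making the bound uniform over all choices of $\Delta_V$ and Q/K updates; the reduction to a single line handles exactly this.

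\textbf{Prefix tuning with $m=2$.} I will set $(P_V)_1 = e_1$ and $(P_V)_2 = e_2$. The remaining task is to choose prefix keys so that each token's attention concentrates on its class's prefix.

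This step is the main obstacle, for two reasons.
\begin{enumerate}
\item Attention always puts some mass on the token-derived values, so the loss reaches zero only in the limit of scaling the keys. The loss is therefore an infimum, which I will state as such, or else assume the token keys can be masked.
\item The routing requires the queries $q_i$ to carry class information.
\end{enumerate}
The second point is delicate because $X$ has rank one: the $x_i$ are scalar multiples $c_i x$, so $q_i = c_i\, xW_Q$. I will therefore assume, as the setting implicitly does, that the class is encoded in the sign or magnitude of $c_i$. For example, take $c_i > 0$ for class 1 and $c_i < 0$ for class 2, and let $a = xW_Q \neq 0$. Then set $(P_K)_1 = \lambda a$ and $(P_K)_2 = -\lambda a$.

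Token $i$'s logits are $\lambda c_i\|a\|^2/\sqrt d$ for prefix 1, $-\lambda c_i\|a\|^2/\sqrt d$ for prefix 2, and $c_i c_j\, a\cdot(xW_K)/\sqrt d$ for context token $j$; the last are independent of $\lambda$. As $\lambda\to\infty$, the softmax concentrates on the correct prefix, so $y_i \to y_i^\star$ and the loss tends to $0$. If exact zero is required, I will instead add a constant bias direction to the keys, or allow the prefix to dominate by setting the token keys to $-\infty$ via masking, and state this convention explicitly.
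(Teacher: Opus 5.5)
Your proof is correct, and parts (i) and (iii) match the paper's. For QK-LoRA, the paper also concludes from $y_i\in S_X$ and $y_i^\star\in S_X^\perp$ that the loss is at least $\tfrac12\sum_i\|y_i^\star\|^2$. Its Jacobian/gradient argument is not needed for this bound, and you rightly skip it. For prefix tuning, the paper uses the same construction: prefix values $e_1,e_2$, opposite prefix keys along the common query direction, and the scale sent to infinity. So, as you note, zero loss is attained only in the limit.

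The genuine difference is in part (ii), QKV-LoRA. The paper argues on the full output: every row of $X(W_V+\Delta_V)$ is a multiple of one vector, so $\rank(Y)\le 1$. Eckart--Young--Mirsky applied to $Y^\star$, whose nonzero singular values are $\sqrt{n_1},\sqrt{n_2}$, then gives $\mathcal{L}\ge\tfrac12\min\{n_1,n_2\}$. You instead project onto $S_X^\perp$, note that all novelty components $\Pi_X y_i$ lie on one line fixed by $\Delta_V$, and apply Bessel's inequality to one token per class.

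Your route is more elementary and uses the same novelty-subspace language as the LoRA proposition. As written, though, it yields only a constant independent of class sizes. Summing over all tokens instead gives
$n_1\bigl(1-\langle \hat w,e_1\rangle^2\bigr)+n_2\bigl(1-\langle \hat w,e_2\rangle^2\bigr)\ge\min\{n_1,n_2\}$
for a unit direction $\hat w$ of the line (a degenerate line only helps). This recovers the paper's bound up to its factor $\tfrac12$.

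Conversely, the paper's rank argument does not need $e_1,e_2\in S_X^\perp$ at all. Your projection step is also dispensable, since the $y_i$ themselves already lie on a single line.

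One small correction: drop the ``sign or magnitude'' hedge. Because $q_i=c_i\,xW_Q$ and every token key is a multiple of $xW_K$, the whole logit vector of token $i$ is $c_i$ times an $i$-independent vector. Your $\pm\lambda a$ keys therefore route tokens only by the sign of $c_i$, which is exactly the class rule the paper's setting fixes.
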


The formal proof, which constructs explicit solutions and lower bounds, is given in \Cref{sec:proofs}.

\section{Related Work}
\label{sec:related_work}

\emph{An extended discussion of related work appears in \Cref{sec:related_work_appendix}; here we summarize the most relevant threads.}

In-context learning (ICL) enables LLMs to adapt from demonstrations without gradient updates~\citep{brown2020language}, but struggles to scale: performance saturates or degrades as context length increases~\citep{agarwal2024many,liu2024lost,hong2025contextrot}, and ICL has been argued to be a shallow learner~\citep{de2025context}. This is particularly limiting for reasoning, where a single training example can exceed 10k tokens, making it infeasible to fit many demonstrations in context. The dominant approach for complex reasoning instead relies on weight updates via fine-tuning or reinforcement learning~\citep{guo2025deepseek,reasoningopenai,yu2023metamath}, but in-weight learning has its own limitations, such as the reversal curse~\citep{berglund2023reversal}.

Prefix tuning~\citep{li2021prefix} offers an alternative by learning continuous vectors that steer the model without weight updates. While originally studied for classification and generation, we explore whether prefix tuning can acquire complex reasoning skills. Related work on context compression~\citep{mu2023learning,chevalier2023adapting,ge2023context} and KV-cache distillation~\citep{caccia2025training} focuses on compressing contextual information rather than learning persistent skills. Cartridges~\citep{eyuboglu2025cartridges} compresses massive contexts into learnable KV-caches for knowledge retrieval; in contrast, we target skill acquisition (e.g., reasoning).

We view the KV-cache as a learnable \emph{memory interface} spanning a spectrum from raw tokens (recovering ICL) to gradient-optimized vectors (approaching in-weight methods). This framing connects to work on memory in neural networks~\citep{zhang2024memory,lin2025continual} and transformers as implicit optimizers~\citep{von2023transformers,mittal2025iterative}. Importantly, prefix tuning as studied here is \emph{not} a test-time learner: the prefix is trained offline and frozen at deployment. Developing methods where the KV-cache remains plastic at test time, enabling true continual learning, remains an open problem.

\section{Discussion}
In this work, we introduce \textsc{ReasonCache}, a mechanism that instantiates prefix tuning (PT) and learns a compact KV-prefix to distill and scale in-context data beyond the context window for efficient reasoning.
Without modifying any existing weights, post-training with \algoname can match or surpass in-weight adaptation methods such as LoRA and SFT on reasoning benchmarks (GSM8K, MATH, AIME, GPQA). \algoname turns out to be more data-efficient (59\% less data), produces more succinct reasoning chains (34\% shorter generations), and is substantially more parameter-efficient (46\% smaller adapted state) than in-weight methods.
We theoretically ground the benefits of \algoname by demonstrating that it can be more expressive by spanning new directions beyond what LoRA can cover.
In all, \algoname offers an efficient alternative for post-training LLMs to reason.

\paragraph{Limitations and future work}
While we focus on reasoning in this work, future work could also explore \algoname as a post-training method to acquire skills other than reasoning. Given the plug-able nature of prefix caches, \algoname opens the door for the composition of multiple skills with benefits including 1) avoiding catastrophic forgetting given prefix vectors leave existing weights intact, 2) inherently interpretable architecture via each skill's prefix attention scores, 3) adaptable prefix sizes for more or less complex skills. 
\algoname also offers possibilities for designing new flexible forms of memory. 
\algoname provides \textit{a spectrum of memory between short-term transient in-context and long-term persistent in-weight memory}. Additionally, the depth at which \algoname is applied can control the flow of abstraction from finer-grained information closer to the raw inputs at earlier layers to more abstract information distilled through deeper layers. As \algoname preserves compatibility with both ICL and in-weight methods, future work could also explore how \algoname can be best combined with existing methods to further enhance steering and skill acquisition in LLMs.
Finally, current inference engines and post-training frameworks do not natively support trainable KV-caches. For broad practical adoption, future work could also consider integrating trainable KV-caches into existing inference engines and post-training libraries using mechanisms similar to KV-caching.\

\section{Acknowledgements}
This work was supported by a Packard Fellowship to P.I., by the MIT-IBM Watson AI Lab, and by ONR MURI grant N00014-22-1-2740. This work was also supported by the NSF AI Institute TILOS (NSF CCF2112665) and the Alexander von Humboldt Foundation. S.G. acknowledges the support of the MathWorks Engineering Fellowship. We thank Reyhane Askari, Divyat Mahajan, Sachin Goyal, Andrei Nicolicioiu and Sarthak Mittal for insightful discussions.

\newpage
{
  \bibliographystyle{plainnat}
  \bibliography{references}
}

\appendix
\crefname{appendix}{Appendix}{Appendices}
\Crefname{appendix}{Appendix}{Appendices}
\crefalias{section}{appendix}
\crefalias{section}{appendix}
\crefalias{subsection}{appendix}
\crefalias{subsubsection}{appendix}
\startcontents[appendices]
\printcontents[appendices]{}{1}{\section*{Appendix}}
\newpage

\section{Supplementary Proofs}
\label{sec:proofs}

This appendix provides complete proofs for all theoretical results in \Cref{sec:theory}.

\subsection{Proof of~\Cref{prop:lora} (LoRA Novelty Subspaces)}

\loraprop*
\begin{proof}
($\Rightarrow$) Assume $U = \Pi_X \,\mathrm{span}\{v_1', \ldots, v_n'\}$ for some $\Delta_V$ with $\rank(\Delta_V) \le r$, where $v_i' = x_i(W_V + \Delta_V)$. The dimension of a projected subspace cannot exceed the dimension of the original subspace:
\[
\dim U = \dim(\Pi_X\,\mathrm{row}(X\Delta_V)) \le \dim(\mathrm{row}(X\Delta_V)) = \rank(X\Delta_V).
\]
By properties of matrix rank, $\rank(X \Delta_V) \le \min\{\rank(X), \rank(\Delta_V)\}$. Given $\rank(X) = t_X$ and $\rank(\Delta_V) \le r$, we have $\dim U \le \min\{t_X, r\}$.

($\Leftarrow$) Let $U \subseteq S_X^\perp$ be a subspace with dimension $s := \dim U \le \min\{t_X, r\}$. We construct a matrix $\Delta_V$ with $\rank(\Delta_V) \le r$ such that $\Pi_X \,\mathrm{span}\{v_1', \ldots, v_n'\} = U$.

Let $\{b_1, \ldots, b_s\}$ be an orthonormal basis for $U$, and form the matrix $B \in \mathbb{R}^{d \times s}$ whose columns are these basis vectors, so $\mathrm{col}(B) = U$. Since $s \le t_X = \rank(X)$, there exists a matrix $C \in \mathbb{R}^{d \times s}$ such that $XC \in \mathbb{R}^{n \times s}$ has rank $s$. To see this constructively, let the SVD of $X$ be $U_X \Sigma_X V_X^\top$; choosing $C$ to be the first $s$ columns of $V_X$ guarantees that $XC$ has rank $s$.

Define $\Delta_V := C B^\top$. The rank satisfies $\rank(\Delta_V) \le \min\{\rank(C), \rank(B^\top)\} = s \le r$. The resulting novelty matrix is $X \Delta_V = (XC) B^\top$. Since $XC$ has full column rank $s$ and $B^\top$ has full row rank $s$, we have $\rank(X \Delta_V) = s$ and $\mathrm{span}\{(X\Delta_V)_1, \ldots, (X\Delta_V)_n\} = U$.

Finally, since $U \subseteq S_X^\perp$, the projector $\Pi_X$ acts as the identity on $U$:
\[
\Pi_X \,\mathrm{span}\{v_1', \ldots, v_n'\} = \Pi_X(U) = U. \qedhere
\]
\end{proof}

\subsection{Proof of~\Cref{prop:pt} (Prefix Tuning Novelty Subspaces)}

\ptprop*
\begin{proof}
($\Rightarrow$) Assume $U = \Pi_X \,\mathrm{span}\{(P_V)_1, \ldots, (P_V)_m\}$. Then $\dim U \le \dim(\mathrm{span}\{(P_V)_1, \ldots, (P_V)_m\}) = \rank(P_V)$. Since $P_V$ has $m$ rows, its rank is at most $m$, so $\dim U \le m$.

($\Leftarrow$) Let $U \subseteq S_X^\perp$ be a subspace with dimension $s := \dim U \le m$. Let $\{u_1, \ldots, u_s\}$ be a basis for $U$. Construct $P_V \in \mathbb{R}^{m \times d}$ by setting its first $s$ rows to these basis vectors and the remaining $m - s$ rows to zero. Then $\mathrm{span}\{(P_V)_1, \ldots, (P_V)_m\} = U$. Since $U \subseteq S_X^\perp$, the projector $\Pi_X$ acts as the identity on $U$:
\[
\Pi_X \,\mathrm{span}\{(P_V)_1, \ldots, (P_V)_m\} = \Pi_X(U) = U. \qedhere
\]
\end{proof}

\subsection{Proof of~\Cref{thm:main} (Comparison of Expressivity)}
\mainthm*
\begin{proof}
By \Cref{prop:lora} and \Cref{prop:pt}, the families $\mathcal{L}(r)$ and $\mathcal{P}(m)$ are precisely the sets of all subspaces of $S_X^\perp$ whose dimensions are bounded by $\min\{t_X, r\}$ and $m$, respectively:
\[
\mathcal{L}(r) = \{U \subseteq S_X^\perp : \dim U \le \min\{t_X, r\}\}, \quad
\mathcal{P}(m) = \{U \subseteq S_X^\perp : \dim U \le m\}.
\]
Since any realizable subspace $U$ must reside in the ambient novelty space $S_X^\perp$, its dimension is inherently capped by $\nu_X = \dim S_X^\perp$. Thus, the maximum achievable dimensions are $D_{\text{LoRA}} = \min\{t_X, r, \nu_X\}$ and $D_{\text{PT}} = \min\{m, \nu_X\}$.

The set of all subspaces of a vector space up to a certain dimension forms a nested hierarchy: a family defined by cap $D_1$ is a subset of another with cap $D_2$ if and only if $D_1 \le D_2$. This immediately implies:
\begin{align*}
\mathcal{L}(r) \subseteq \mathcal{P}(m) &\iff D_{\text{LoRA}} \le D_{\text{PT}}, \\
\mathcal{P}(m) \subseteq \mathcal{L}(r) &\iff D_{\text{PT}} \le D_{\text{LoRA}}.
\end{align*}
Strict inclusion $\mathcal{L}(r) \subset \mathcal{P}(m)$ holds if and only if $\mathcal{L}(r) \subseteq \mathcal{P}(m)$ and $\mathcal{L}(r) \neq \mathcal{P}(m)$, which is equivalent to $D_{\text{LoRA}} < D_{\text{PT}}$. In this case, there exists a subspace $U \subseteq S_X^\perp$ of dimension $D_{\text{PT}}$ belonging to $\mathcal{P}(m)$ but not to $\mathcal{L}(r)$.
\end{proof}

\subsection{Proof of~\Cref{thm:hierarchy} (Loss Hierarchy)}
\hierarchythm*
We formalize the setting of the illustrative example in \Cref{sec:learnability}. Consider a single attention layer with frozen projections $W_Q, W_K, W_V \in \mathbb{R}^{d \times d}$. Suppose the context $X \in \mathbb{R}^{n \times d}$ has rank one, so $X = a u^\top$ for some $a \in \mathbb{R}^n$ and unit vector $u \in \mathbb{R}^d$. Each token $i$ belongs to one of two classes, with label $c_i \in \{1, 2\}$ determined by $c_i = 1$ if $a_i > 0$ and $c_i = 2$ if $a_i < 0$. Let $n_j := |\{i : c_i = j\}|$ denote the number of tokens in class $j$.

The target output for token $i$ is $y_i^\star := e_{c_i}$, where $e_1, e_2 \in \mathbb{R}^d$ are orthogonal unit vectors in $S_X^\perp$. The target matrix $Y^\star \in \mathbb{R}^{n \times d}$ thus has rank 2. The loss is $\mathcal{L}(Y) := \frac{1}{2}\|Y - Y^\star\|_F^2$.

We prove each part of the theorem separately.

\subsubsection{Part (i): QK-LoRA Loss Floor}

\begin{lemma}
\label{lem:qk-lock}
Under any variation $(\delta W_Q, \delta W_K)$, the output variation satisfies $\delta Y = (\delta A) V$, so each row $\delta y_i \in S_X$.
\end{lemma}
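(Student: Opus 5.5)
The plan is to write the attention output as a product of an attention matrix and a value matrix, and observe that the query/key matrices only enter the first factor. Since QK-LoRA leaves $W_V$ frozen, the value matrix $V = XW_V$ stays fixed throughout.

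\textbf{Step 1: separate the two factors.} In the single-layer setting of \Cref{thm:hierarchy} (no residual connections, no nonlinearity after attention), the output is
\[
Y = A V, \qquad A = \mathrm{softmax}\bigl(XW_Q (XW_K)^\top/\sqrt{d}\bigr).
\]
The perturbation $(\delta W_Q, \delta W_K)$ enters only through $Q$ and $K$, hence only through $A$. It leaves $V$ unchanged.

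\textbf{Step 2: compute the variation.} By the product rule, $\delta Y = (\delta A)V + A(\delta V)$. Since $\delta V = 0$, this gives $\delta Y = (\delta A)V$.

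This identity holds both for infinitesimal variations and for finite ones. For finite changes, $Y' - Y = (A' - A)V$, where $A'$ is the attention matrix under the perturbed weights. I would state the finite version explicitly, because that is what the loss-floor argument later needs.

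\textbf{Step 3: read off the row space.} Row $i$ of $(\delta A)V$ is $\sum_j (\delta A)_{ij} v_j$. This is a linear combination of $v_1, \ldots, v_n$, so it lies in $S_X = \mathrm{span}\{v_j\}$ by definition.

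Note that this only requires $\delta A$ to be an arbitrary real matrix. We do not need its rows to be convex weights, or even to sum to zero. Combined with $y_i \in S_X$ for the base model, it follows that every QK-LoRA output lies in $S_X$.

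\textbf{Expected obstacle.} The only delicate point is making sure the setting really has no path from $W_Q, W_K$ into the values: no residual stream, no shared projection, and no layer normalization that mixes the two. I would state this assumption explicitly from the formal setting before carrying out Steps 1--3. Everything else is routine linear algebra.
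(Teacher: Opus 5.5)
Your proposal is correct and follows the paper's argument: $V = XW_V$ does not depend on $W_Q, W_K$, so differentiating $Y = AV$ gives $\delta Y = (\delta A)V$, and each row of this is a linear combination of the $v_j$, hence lies in $S_X$. The finite version you add, $Y' - Y = (A' - A)V$, is a sound strengthening. It directly gives $y_i \in S_X$ for every QK-LoRA configuration, which is the fact Part (i) actually relies on for its loss floor.
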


\begin{proof}
The value matrix $V$ does not depend on $W_Q$ or $W_K$. Differentiating $Y = AV$ gives $\delta Y = (\delta A) V$; each row is a linear combination of rows of $V$, hence lies in $S_X = \mathrm{span}\{v_1, \ldots, v_n\}$.
\end{proof}

\begin{proof}[Proof of Part (i)]
Let $E := Y - Y^\star$ and decompose $E_i = E_{i,\parallel} + E_{i,\perp}$ with $E_{i,\parallel} \in S_X$ and $E_{i,\perp} \in S_X^\perp$. Let $\theta := (W_Q, W_K)$ and let $J := \frac{\partial \mathrm{vec}(Y)}{\partial \theta}$ be the Jacobian.

By \Cref{lem:qk-lock}, each row of any infinitesimal output change $\delta Y$ lies in $S_X$, so $\langle E_\perp, \delta Y \rangle_F = 0$ for all variations. Since $\mathrm{vec}(\delta Y) = J \delta\theta$, we have $\mathrm{vec}(E_\perp)^\top J \delta\theta = 0$ for all $\delta\theta$, implying $J^\top \mathrm{vec}(E_\perp) = 0$.

The gradient of the loss is:
\[
\nabla_\theta \mathcal{L} = J^\top \mathrm{vec}(E) = J^\top \mathrm{vec}(E_\parallel) + \underbrace{J^\top \mathrm{vec}(E_\perp)}_{= 0} = J^\top \mathrm{vec}(E_\parallel).
\]
Thus, the gradient has no component that can reduce the novelty error $E_\perp$. Since $y_i^\star \in S_X^\perp$ and $y_i \in S_X$, we have $E_{i,\perp} = -y_i^\star$, giving:
\[
\mathcal{L}(Y) \ge \frac{1}{2} \|E_\perp\|_F^2 = \frac{1}{2} \sum_{i=1}^n \|y_i^\star\|^2 > 0. \qedhere
\]
\end{proof}

\subsubsection{Part (ii): QKV-LoRA Loss Floor}

\begin{proof}
With $W_V' = W_V + \Delta_V$ trainable, we have $X W_V' = (au^\top)(W_V + \Delta_V) = a \cdot (u^\top W_V + u^\top \Delta_V)$. Since $u^\top W_V + u^\top \Delta_V$ is a $1 \times d$ row vector, all rows of $X W_V'$ are scalar multiples of this vector. Thus $\rank(X W_V') \le 1$, and consequently $\rank(Y) \le 1$ for all parameter settings.

The target $Y^\star$ has $\rank(Y^\star)=2$, and its two nonzero singular values are
$\sqrt{n_1}$ and $\sqrt{n_2}$ (equivalently, in an orthonormal basis containing $e_1,e_2$,
$Y^\star$ has exactly two nonzero columns, which are orthogonal indicator vectors of norms
$\sqrt{n_1}$ and $\sqrt{n_2}$). By Eckart--Young--Mirsky, the best rank-1 approximation error in Frobenius norm is:
\[
\min_{\rank(Y) \le 1} \frac{1}{2} \|Y - Y^\star\|_F^2 = \frac{1}{2} \sigma_2(Y^\star)^2 = \frac{1}{2} \min\{n_1, n_2\} > 0. \qedhere
\]
\end{proof}

\subsubsection{Part (iii): Prefix Tuning Drive the Loss to Zero}

\begin{proof}
The proof is constructive. First, we show the queries are linearly separable. The query vectors are $q_i = a_i (u^\top W_Q)$ where each $q_i$ is a scalar multiple of the fixed direction $u^\top W_Q$. Define $\beta := (u^\top W_Q)^\top \in \mathbb{R}^d$. Assuming $\beta \neq 0$ (non-degeneracy), we have $\langle q_i, \beta \rangle = a_i \|\beta\|^2$, so $\mathrm{sign}(\langle q_i, \beta \rangle) = \mathrm{sign}(a_i)$. By the class definition, $c_i = 1$ iff $a_i > 0$ iff $\langle q_i, \beta \rangle > 0$, with margin $\gamma := \|\beta\|^2 \min_i |a_i| > 0$.

We construct a solution with prefix values $(P_V)_1 = e_1$, $(P_V)_2 = e_2$ and prefix keys $(P_K)_1 = \tau \beta$, $(P_K)_2 = -\tau \beta$ for large $\tau > 0$. For a class-1 token ($\langle q_i, \beta \rangle \ge \gamma$), the prefix logits satisfy $s'_{i1} = \frac{\tau}{\sqrt{d}} \langle q_i, \beta \rangle \to \infty$ and $s'_{i2} \to -\infty$ as $\tau \to \infty$. The attention weight on the first prefix converges to:
\[
A'_{i,n+1} = \frac{e^{s'_{i1}}}{\sum_j e^{s'_{ij}}} \to 1 \quad \text{as } \tau \to \infty.
\]
Symmetrically, class-2 tokens have $A'_{i,n+2} \to 1$. Thus, $y_i \to (P_V)_{c_i} = e_{c_i} = y_i^\star$ for all $i$, and $\mathcal{L}(Y) \to 0$.
\end{proof}

\section{Supplementary Experimental Details and Assets Disclosure}\label{sec:exp_setup}

\subsection{Assets}
We do not introduce new data in the course of this work. Instead, we use publicly available, widely used image datasets for the purposes of benchmarking and comparison.

\subsection{Hardware and Setup}
\label{sec.compute}
Each model was trained with distributed training on 8 NVIDIA H200 GPUs (144 GB memory each) paired with Intel Xeon Platinum 8488C CPUs (48 cores). Training was implemented in PyTorch using the TRL library~\citep{trl}, and inference was run in parallel on 4 H200 GPUs. For evaluation, we used the Language Model Evaluation Harness (\texttt{lm-evaluation-harness}) library~\citep{eval-harness}.

\subsection{Experimental Setup and Training Protocol}\label{add:exp_setup}
\subsubsection{Training Datasets}\label{sec:training}
We rely on two large-scale training corpora that expose models to rich chains of mathematical reasoning.  

MetaMathQA~\citep{yu2023metamath} is a dataset of roughly 400k math problems, each paired with detailed step-by-step solutions covering algebra, geometry, number theory, and calculus. We use it to provide broad coverage of high-school and undergraduate mathematics.  

OpenThoughts-3~\citep{guha2025openthoughts} contains approximately 3 million long-form reasoning traces across mathematics, physics, and logical puzzles. To make training feasible while retaining extended reasoning structure, we take a filtered subset of this dataset by discarding examples whose reasoning traces exceed 4096 tokens (as discussed in~\citep{guha2025openthoughts}). This reduces overall size while preserving the majority of long, structured derivations.

\subsubsection{Evaluation Benchmarks}\label{sec:evaluation}

Our evaluation spans two categories of reasoning tasks that capture distinct levels of difficulty: \emph{high-school mathematics}, which tests structured multi-step problem solving, and \emph{advanced competition or graduate-level problems}, which demand deeper abstraction and domain expertise.  

\paragraph{High-school mathematics.}  
GSM8K~\citep{cobbe2021training} consists of 8.5k grade-school word problems authored by human tutors. Each requires multi-step arithmetic reasoning and is paired with gold chain-of-thought solutions.  
MATH~\citep{hendrycks2020measuring} contains 12k competition-style problems spanning algebra, geometry, number theory, combinatorics, precalculus, and calculus. We follow prior work in evaluating on the standardized subset used in OpenAI’s benchmarks.  

\paragraph{Advanced reasoning.}
GPQA-Diamond~\citep{rein2024gpqa} contains 198 graduate-level questions in mathematics, physics, chemistry, and computer science. These items are sourced from qualifying exams and research-level material. Expert performance remains far from perfect (PhD holders achieve only 69.7\%), making it a stringent test of advanced reasoning~\citep{reasoningopenai}. When we write "GPQA" in the context of evaluation in this work, we always refer to the Diamond subset.

\subsubsection{Training and Evaluation Protocol}
\textbf{Training Details.} All models are trained with AdamW and a cosine learning-rate schedule with warmup ratio 0.05. Weight decay is set to zero throughout. On OpenThoughts-3, we train for 13 epochs with batch size 32 and a maximum sequence length of 8192. The learning rate is $1\text{e}{-5}$ for supervised fine-tuning, and $1\text{e}{-3}$ for prefix tuning, LoRA, and prompt tuning. On MetaMathQA, we train for 3 epochs with batch size 128 and a maximum sequence length of 2048, using a learning rate of $2\text{e}{-5}$ for all methods.  

For LoRA, we sweep over ranks $\{8,16,32,64,128\}$ and over different insertion points, including $W_Q,W_K$, full $W_{QKV}$, and $W_{QKV}$ plus feedforward layers, and report the best-performing configuration. For prompt tuning and prefix tuning, we sweep over the number of learned tokens in $\{16,32,64,128,256,512\}$ and report the best configuration. We additionally study two initialization techniques. First, \emph{reparameterization}, where prefixes are generated from a smaller embedding projected through an MLP, improving training stability. Second, \emph{initialization strategies}: prefixes are either initialized from a random token embedding in the base model’s vocabulary, or from a random example in the training dataset.  

\textbf{Evaluation Details.} At inference, all experiments are evaluated using the \texttt{lm-evaluation-harness} framework~\citep{eval-harness}, with temperature set to $0$ (greedy decoding). Accuracy, equivalent to pass@1, is reported as the primary metric. For GSM8K and MATH, we rely on the standard evaluation configurations provided in the harness.  

For models trained on the filtered OpenThoughts-3 dataset and evaluated on GPQA-Diamond, we adopt \emph{budget forcing} at inference~\citep{muennighoff2025s1}. Prefix tuning is not yet supported in \texttt{vLLM}, and HuggingFace backends can become bottlenecked by very long generations. We believe this limitation has constrained large-scale testing and broader adoption of prefix tuning, confining most prior work to datasets with shorter or no reasoning traces.  To address this, we follow the strategy proposed in~\citet{muennighoff2025s1}: a decoding-time intervention that constrains the number of "thinking" tokens. Specifically, generation is terminated early by appending an end-of-thinking delimiter and "Final Answer:" once the budget is reached. In our experiments, we enforce a maximum reasoning budget of 4096 tokens for evaluations on GPQA.

\section{Additional Experiments}
\subsection{Ablating Design Choices for \algofullname}\label{sec:ablations}
We study two knobs: (1) how the prefix is parameterized; and (2) how it is initialized; and measure their effect under varying parameter budgets. In both ablations, we finetune using \algoname on the complete MetaMathQA dataset and optimize to convergence, sweeping parameter budgets to quantify how each design choice impacts performance.

\begin{figure}[!htb]
    \centering
    \begin{minipage}{0.35\textwidth}
    \centering
    \includegraphics[width=\textwidth]{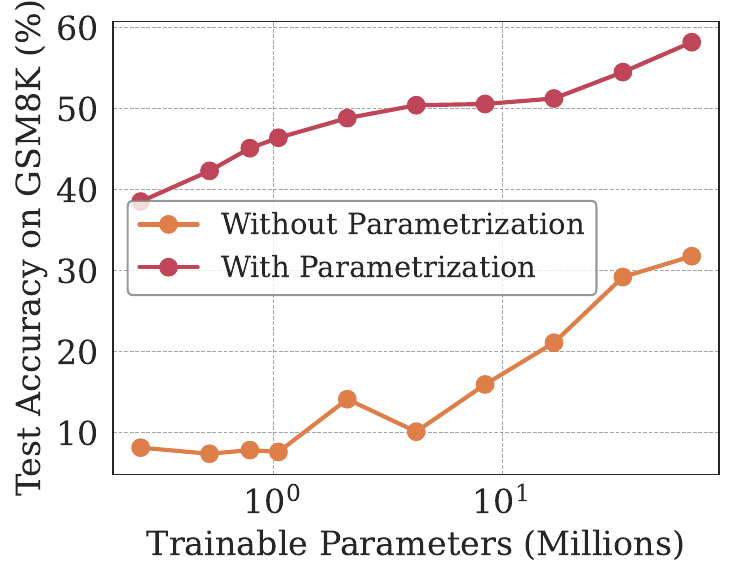}
    \caption{Test accuracy as a function of trainable parameters for \algofullname with direct optimization of prefix vectors and with MLP parametrization. Parametrization consistently improves accuracy, especially in the low-parameter regime.}
    \label{fig:acc_vs_parametrization}
    \end{minipage}
    \hspace{0.04\textwidth}
    \begin{minipage}{0.35\textwidth}
    \centering
    \includegraphics[width=\textwidth]{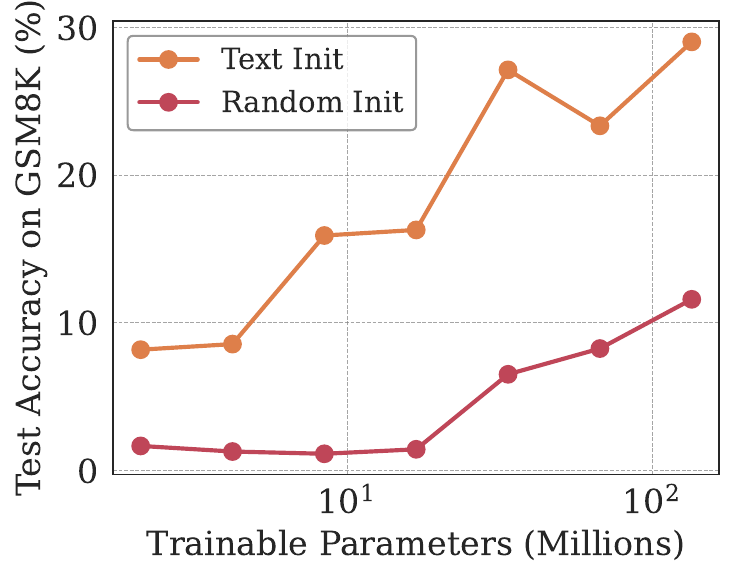}
    \caption{Test accuracy as a function of trainable parameters for \algofullname with random and text-based initialization. Initializing prefixes from text embeddings yields higher accuracy across all parameter budgets.}
    \label{fig:acc_vs_initialization}
    \end{minipage}
\end{figure}

\subsubsection{Effect of Prefix Parametrization}\label{sec: prefix parametrization}
\algofullname learns a set of virtual tokens (key-value vectors) at each layer. One can optimize these prefixes directly as free parameters, or generate them with a small network (e.g., an MLP) trained end-to-end via backpropagation. As shown in~\Cref{fig:acc_vs_parametrization}, MLP parametrization consistently improves test performance across all trainable-parameter budgets, with the largest gains in the low-budget regime. As we increase the parameter budget by increasing the prefix length, both variants improve, but parametrized prefixes retain a clear advantage throughout.

\subsubsection{Effect of Prefix Initialization}\label{sec: prefix initialization}
Initialization also plays a central role in \algofullname. Prefixes can be learned from scratch with random initialization, or initialized from text embeddings of few-shot exemplars. As shown in~\Cref{fig:acc_vs_initialization}, text-based initialization consistently improves accuracy across all trainable-parameter budgets. As we increase the prefix length, optimization improves both variants, but the advantage of text initialization persists.

\subsection{Probing The Representation Geometry for Expressivity}\label{sec:validate_theory}

\begin{figure}[!htb]
    \centering
    \begin{minipage}{0.46\textwidth}
    \centering
    \includegraphics[width=\textwidth]{figures/results/effective_rank.pdf}
    \caption{Effective dimension of value vectors across attention heads. Most heads use less than half of their available dimension, showing that the base model operates in a compact subspace.}
    \label{fig:app_effective_dimension}
    \end{minipage}
    \hfill
    \begin{minipage}{0.47\textwidth}
    \centering
    \includegraphics[width=\textwidth]{figures/results/prefix_energy.pdf}
    \caption{Fraction of prefix energy outside the dominant value subspace of the base model. Prefixes place 80\% of their mass beyond this span, exploiting directions unused by the base model.}
    \label{fig:app_prefix_energy}
    \end{minipage}
\end{figure}

Our theoretical analysis shows that when \algoname operates in regimes where in-weight updates are carrier-limited, it can realize its advantage by injecting value-space directions that lie outside the subspace used by the base model. We empirically validate this by probing representation geometry at three levels: (i) the dimensionality of value vectors used by the pretrained model, (ii) the directions of learned prefix values relative to this subspace, and (iii) the resulting dimensionality of downstream representations. All experiments in this section use LLaMA-2 7B as the base model, evaluated on GSM8K after adaptation on MetaMathQA.

\subsubsection{Base Models Operate In a Low-Dimensional Value Subspace.}
For each attention head in the final layer, we analyze the value matrix $V \in \mathbb{R}^{n \times d_v}$ induced by the pretrained model and compute its singular-value spectrum. Since $\sigma_i^2$ are the eigenvalues of $V^\top V$, they measure the variance explained along each direction. We define the \emph{effective dimension} of $V$ as the smallest $K$ such that
\[
\sum_{i=1}^{K} \sigma_i^2 \;\ge\; 0.9 \sum_{j=1}^{d_v} \sigma_j^2,
\]
reported as $K/d_v$.
Across heads and inputs, we find that the effective dimension of $V$, defined as the smallest $K$ capturing 90\% of the total variance, is consistently well below $d_v$ (\Cref{fig:app_effective_dimension}). This indicates that the base model concentrates computation in a compact value subspace, leaving substantial unused degrees of freedom for adaptation algorithms to exploit.

\subsubsection{Prefix Values Inject Energy Outside The Base Subspace}
We next examine whether \algoname exploits this geometric headroom. Let $V_P \in \mathbb{R}^{m \times d_v}$ denote the learned prefix values. Using the $K$-dimensional subspace of $V$ defined above, we project $V_P$ onto $\mathrm{span}(V)$ and measure the normalized residual energy
\[
\frac{\lVert V_P - \Pi_{\mathrm{span}(V)} V_P \rVert_F}{\lVert V_P \rVert_F}.
\]
This quantity captures the fraction of prefix energy that lies outside the value subspace actively used by the base model. As shown in~\Cref{fig:app_prefix_energy}, learned prefixes allocate the majority of their energy (typically 70--90\%) outside $\mathrm{span}(V)$, consistently across attention heads and input samples. Thus, \algoname does not merely reweight existing value directions but injects novel directions that are largely unused by the pretrained computation.

\subsubsection{Injected Novelty Propagates To Downstream Representations}
\begin{figure}[!htb]
    \centering
    \begin{minipage}{0.43\textwidth}
    \centering
    \includegraphics[width=\textwidth]{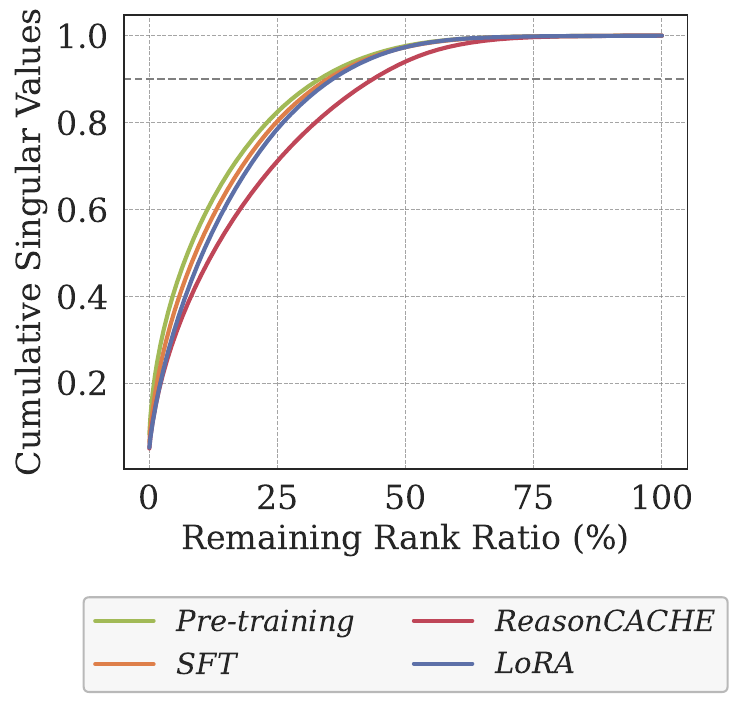}
    \caption{Cumulative singular-value spectra of last-layer token representations for GSM8K. \algoname yields a more gradual spectrum than the base model, SFT, and LoRA, indicating broader support across singular directions.}
    \label{fig:app_representation_collapse}
    \end{minipage}
    \hfill
    \begin{minipage}{0.47\textwidth}
    \centering
    \includegraphics[width=\textwidth]{figures/results/representation_collapse_ranks.pdf}
    \caption{Effective rank of last-layer token representations, defined as the rank at which the cumulative singular-value mass reaches $90\%$. \algoname attains the highest effective rank among all methods.}
    \label{fig:app_effective_rank}
    \end{minipage}
\end{figure}

We finally ask whether these newly injected directions persists beyond value space and reshapes the geometry of the model’s downstream representations. Concretely, we compute the singular-value spectrum of the last-layer token representation matrix $F$ and normalize it to form a distribution over singular directions. Averaging this normalized spectrum across the evaluation set and taking its cumulative mass yields the curves in~\Cref{fig:app_representation_collapse}. We summarize these curves via an \emph{effective rank}, defined as the smallest $k$ whose averaged cumulative mass reaches $90\%$.

As shown in~\Cref{fig:app_effective_rank}, \algoname produces a markedly more gradual spectrum and a higher effective rank than both supervised fine-tuning and LoRA. This indicates that the directions injected at the attention level do not vanish downstream; instead, they propagate through the network and are expressed in the final representations as broader support across singular directions. 

Together, these results corroborate our theoretical analysis. The pretrained model indeed operates within a low-dimensional value subspace; \algoname injects directions orthogonal to this span; and these directions propagate to yield higher-dimensional downstream representations. 

\begin{figure}[!htb]
    \centering
    \begin{minipage}{.8\textwidth}
        \centering
        \begin{minipage}{0.32\textwidth}
            \centering
            \includegraphics[width=\textwidth]{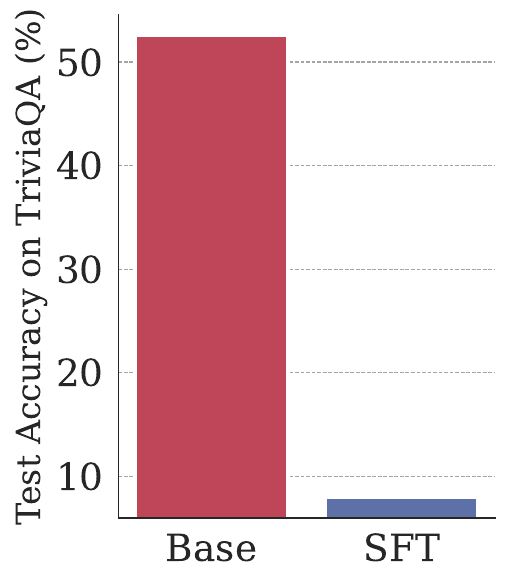}
        \end{minipage}
        \hfill
        \begin{minipage}{0.32\textwidth}
            \centering
            \includegraphics[width=\textwidth]{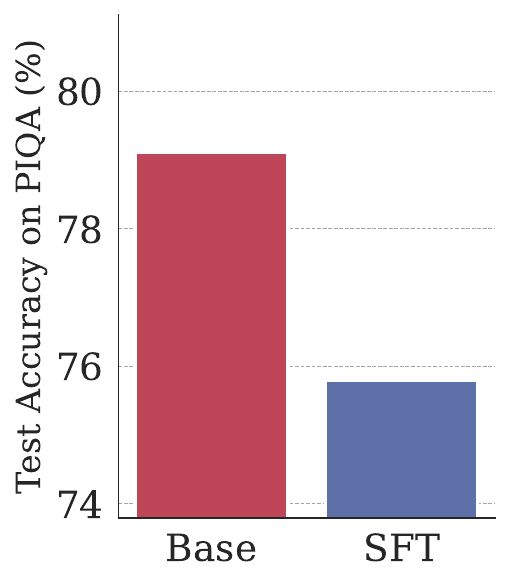}
        \end{minipage}
        \hfill
        \begin{minipage}{0.32\textwidth}
            \centering
            \includegraphics[width=\textwidth]{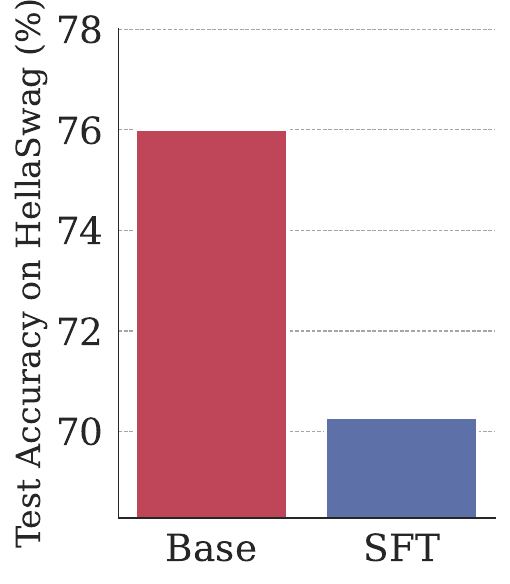}
        \end{minipage}
        \caption{Accuracy on TriviaQA, PIQA, and HellaSwag for LLaMA-2 7B before and after supervised fine-tuning (SFT) on MetaMathQA, illustrating catastrophic forgetting under in-weight adaptation.}
        \label{fig:forgetting}
    \end{minipage}
\end{figure}

\subsection{In-Weight Adaptation Exhibits Catastrophic Forgetting Unlike \algofullname}\label{sec: forgetting}

Finally, we test whether in-weight adaptation degrades general-purpose capabilities outside the training distribution.~\Cref{fig:forgetting} reports accuracy on TriviaQA, PIQA, and HellaSwag for LLaMA-23 7B before and after supervised fine-tuning (SFT) on MetaMathQA. SFT substantially reduces performance on all three benchmarks relative to the base model, consistent with catastrophic forgetting from task-specific weight updates~\citep{kirkpatrick2017overcoming}.

By contrast, \algoname and LoRA leave the pretrained weights unchanged and are reversible by construction: removing the learned prefixes or adapters exactly recovers the base model. We therefore omit them from \Cref{fig:forgetting}, since their post-removal performance matches the base model. Thus, while SFT can improve task performance but may overwrite broad capabilities, whereas parameter-isolated adaptation preserves the base model while enabling similar to better task-specific performance as shown in~\Cref{fig:results_main_fig}.

\section{Extended Related Work}
\label{sec:related_work_appendix}

This section provides an extended discussion of related work, expanding on the summary in \Cref{sec:related_work}.

\paragraph{In-Context Learning and Its Limitations.}
In-context learning (ICL) enables LLMs to adapt to new tasks from demonstrations without gradient updates~\citep{brown2020language}. While remarkably sample-efficient for format control and simple tasks~\citep{min2022rethinking}, ICL faces fundamental limitations when scaling to tasks that require more demonstrations such as complex reasoning. These limitations manifest along several axes: \emph{positional bias}, where models struggle to access information in the middle of long contexts~\citep{liu2024lost}; \emph{stability degradation}, where increasing context length introduces attention noise and unreliable performance~\citep{hong2025contextrot}; \emph{shallow learning}, where learning is primarily driven by deducing patterns from prompt regularities, resulting in limited generalization~\citep{de2025context}; and \emph{diminishing returns}, where as the number of ICL demonstrations increases from a few to many, the performance of LLMs initially improves but then plateaus and can even decline~\citep{agarwal2024many,zhang2025more,zhang2025memory}. These limitations motivate learning to compress demonstrations into a compact KV-cache that sidesteps the pathologies of long contexts.

\paragraph{Prompting and Eliciting Reasoning.}
Chain-of-thought prompting~\citep{wei2022chain} showed that intermediate reasoning steps can elicit latent reasoning abilities. Extensions include least-to-most prompting~\citep{zhou2022least}, which decomposes problems into subproblems, self-consistency~\citep{wang2022self}, which improves reliability via multiple sampled chains, and zero-shot chain-of-thought~\citep{kojima2022large}, which activates reasoning with simple prompt phrases. More recently, schema-activated ICL~\citep{chen2025schema} has proposed retrieving abstract reasoning templates to bridge the gap between pattern priming (surface-level mimicry) and complex reasoning.

\paragraph{Prompt Tuning and Prefix Tuning.}
Prompt tuning~\citep{lester2021power} introduced learnable continuous embeddings prepended at the input layer, enabling task adaptation without modifying pretrained weights. Prefix tuning~\citep{li2021prefix} extended this by learning key--value vectors at every attention layer, providing greater representational flexibility. P-tuning v2~\citep{liu2021p} showed that deep prompt tuning can match full fine-tuning across scales and tasks. Recent work has analyzed the statistical benefits of prefix reparameterization~\citep{le2024revisiting}, proposed architectural improvements such as decoupling the attention paid to the prefixes and regular context~\citep{wang2025prefix}, and used Bayesian frameworks to demonstrate that soft prefixes can manipulate activations in ways inaccessible to discrete tokens~\citep{genewein2025understanding}. While these methods were primarily evaluated on classification and generation tasks, we study whether prefix tuning can acquire complex reasoning skills---a capability typically associated with weight updates.

\paragraph{Context Compression and Distillation.}
Several methods aim to compress long contexts into compact representations. Gist tokens~\citep{mu2023learning} train an LM to compress prompts into a small number of virtual tokens that can be cached and reused. AutoCompressor~\citep{chevalier2023adapting} adapts LMs to compress long contexts into compact summary vectors that are used as soft prompts. LLMLingua~\citep{jiang2023llmlingua} performs coarse-to-fine prompt compression by selecting/pruning tokens using LM-scoring signals (e.g., perplexity), while LLMLingua-2~\citep{pan2024llmlingua} uses data distillation to learn task-agnostic compression policies. Natural language compression~\citep{chuang2024learning} produces shorter textual ``capsule prompts'' for better transferability across LLMs. ICAE~\citep{ge2023context} compresses long context into compact memory slots via an in-context autoencoding objective. Cartridges~\citep{eyuboglu2025cartridges} compress massive contexts into learnable KV-caches via self-study, allowing fast inference with frozen backbones. Deep context distillation~\citep{caccia2025training} trains document-level knowledge modules (LoRA modules) to simulate a teacher model's hidden states and logits when the teacher has full document access. These methods primarily compress or modularize \emph{contextual information}, rather than learning a persistent complex skill (e.g., reasoning) beyond what is expressed in the compressed representation.

\paragraph{Memory and Continual Learning.}
Our work relates to the broader question of how LLMs can acquire and retain knowledge. Standard in-weight learning comes with its own issues, such as the reversal curse~\citep{berglund2023reversal}, where models fail to generalize bidirectional relationships. In contrast, Lampinen et al.~\citep{lampinen2025generalization} demonstrate that in-context learning naturally avoids these failures, generalizing flexibly where in-weight fine-tuning breaks down. However, ICL is ephemeral and limited by context length.
Continual learning via sparse memory fine-tuning~\citep{lin2025continual} updates only the most relevant memory slots to prevent catastrophic forgetting. Memory Mosaics~\citep{zhang2024memory} replace standard attention with networks of associative memories, demonstrating superior compositional generalization. Our perspective differs: we view the KV-cache as a \emph{learnable memory interface} that spans a spectrum from raw tokens (recovering ICL) to gradient-optimized vectors (approaching in-weight learning). This framing opens a broader design space for adaptation, where learned prefixes serve as modular, pluggable ``skill caches'' that can be composed or swapped without modifying the base model. Prefix tuning, however, is \emph{not} a true continual/test-time learner: the KV-cache is trained offline and then frozen. Developing methods where the KV-cache remains plastic at test time is an exciting direction for future work.

\paragraph{In-Context Optimization and Test-Time Learning.}
Recent work has framed transformers as performing implicit optimization over context~\citep{von2023transformers,mittal2025iterative}. Context tuning~\citep{lu2025context} initializes learnable prefixes from ICL demonstrations, showing that optimization works best when grounded in context. Function vectors~\citep{todd2023function} identify that specific attention heads transport compact task representations, providing mechanistic insight into why prefix tuning works. Generative adapters~\citep{chen2024generative} use hypernetworks to generate LoRA adapters based on input context.

\paragraph{Reasoning via Fine-Tuning and Reinforcement Learning.}
The dominant approach for instilling reasoning capabilities involves weight updates. DeepSeek-R1~\citep{guo2025deepseek} and OpenAI's reasoning models~\citep{reasoningopenai} use reinforcement learning to incentivize reasoning. Data curation efforts~\citep{guha2025openthoughts,yu2023metamath,ye2025limo} focus on constructing high-quality reasoning traces. Methods such as s1~\citep{muennighoff2025s1} and LIMO~\citep{ye2025limo} show that a small number of high-quality examples ($<$1000) can instill reasoning skills, though individual examples can be extremely long ($>$10k tokens). Although prior theoretical work suggests that prefix tuning may be less expressive than full fine-tuning~\citep{petrov2023prompting}, our work shows that prefix tuning offers an alternative post-training interface: learning reasoning skills into a portable KV-cache that can be plugged into a frozen model, combining the modularity of in-context methods with the depth of weight-based learning.


\end{document}